\def\eqref#1{equation~\ref{#1}}
\def\1{\bm{1}}
\DeclareMathAlphabet{\mathsfit}{\encodingdefault}{\sfdefault}{m}{sl}
\SetMathAlphabet{\mathsfit}{bold}{\encodingdefault}{\sfdefault}{bx}{n}
\newcommand{\E}{\mathbb{E}}
\newcommand{\R}{\mathbb{R}}
\newcommand{\real}{\mathbb{R}}
\renewcommand{\paragraph}[1]{\textbf{#1.}}
\newtheorem{theorem}{Theorem}[section]
\newcommand{\ourmodel}{StructComp}
\title{StructComp: Substituting propagation with Structural Compression in Training Graph Contrastive Learning}
\author{Shengzhong Zhang \\
Fudan University, Shanghai, China\\
\texttt{szzhang17@fudan.edu.cn} \\
\And
Wenjie Yang \\
Fudan University, Shanghai, China\\
\texttt{yangwj22@m.fudan.edu.cn} \\
\AND
 Xinyuan Cao \\
Georgia Institute of Technology, Midtown, USA \\
\texttt{xcao78@gatech.edu} \\
\And
 Hongwei Zhang\\
Fudan University, Shanghai, China\\
\texttt{hwzhang22@m.fudan.edu.cn} \\
\AND
Zengfeng Huang\thanks{Corresponding author} \\
Fudan University, Shanghai, China\\
\texttt{huangzf@fudan.edu.cn} \\
}
\begin{document}

\maketitle

\begin{abstract}

Graph contrastive learning (GCL) has become a powerful tool for learning graph data, but its scalability remains a significant challenge. In this work, we propose a simple  yet  effective training framework called Structural Compression (StructComp) to address this issue. Inspired by a sparse low-rank approximation on the diffusion matrix, \ourmodel~trains the encoder with the compressed nodes. This allows the encoder not to perform any message passing during the training stage, and significantly reduces the number of sample pairs in the contrastive loss. We theoretically prove that the original GCL loss can be approximated with the contrastive loss computed by \ourmodel. Moreover, \ourmodel~can be regarded as an additional regularization term for GCL models, resulting in a more robust encoder. Empirical studies on various datasets show that StructComp greatly reduces the time and memory consumption while improving model performance compared to the vanilla GCL models and scalable training methods. 
    
\end{abstract}

\section{introduction}\label{sec:intro}
Graph neural networks (GNNs) \citep{kipf2016semi,velickovic2017graph,chen2020simple,liu2020towards} provide powerful tools for analyzing complex graph datasets and are widely applied in various fields such as recommendation system \citep{cai2023lightgcl}, social network analysis \citep{zhang2021cvp}, traffic flow prediction \citep{Wang2020traffic}, and molecular property prediction \citep{alex2017protein}. However, the scalability limitation of GNNs hampers their extensive adoption in both industrial and academic domains. This challenge is particularly pronounced within the realm of unsupervised graph contrastive learning (GCL). Compared to the research on the scalability of supervised GNNs \citep{hamilton2017inductive, chen2018fastgcn,chen2018stochastic,zou2019layer,cong2020minimal,ramezani2020gcn,markowitz2021graph, zeng2020graphsaint,chiang2019cluster,zeng2021decoupling}, there is little attention paid to the scalability of GCL \citep{wang2022adagcl, zheng2022ggd}, and there is no universal framework for training various GCL models.

The scalability issue of GCL mainly has two aspects: Firstly, the number of nodes that need to be computed in message passing grows exponentially. Secondly, GCL usually requires computation of a large number of sample pairs, which may require computation and memory quadratic in the number of nodes.  At the same time, the graph sampling \citep{zeng2020graphsaint,chiang2019cluster,zeng2021decoupling} and decoupling technology \citep{wu2019simplifying,zhu2021ssgc} used for supervised GNN training are not applicable to GCL. Graph sampling might affect the quality of positive and negative samples, thereby reducing the performance of the model. As GCL usually involves data augmentation, the decoupling method that precomputes the diffusion matrix is not feasible.

In order to solve the two aforementioned problems simultaneously, we use a sparse assignment matrix to replace message passing, which is a low-rank approximation of the diffusion matrix. Utilizing the assignment matrix, we can compute the mixed node features, which contain all the local node information and can be regarded as the community center feature. By controlling the similarity of the community center embeddings, we can make node embeddings in similar communities close to each other, and node embeddings in dissimilar communities distant from each other. Since the number of sample pairs needed for computation by mixed nodes is significantly less than that for full nodes, and the encoder no longer computes message passing, the computational resources required for training are greatly saved.

Specifically, we propose an extremely simple yet effective GCL training framework, called Structural Compression (\ourmodel). This framework is applicable to various single-view GCL models and multi-view GCL models. During the training process, the GNN encoder does not need to perform message passing, at which point the encoder can be regarded as a MLP. Our model takes the compressed features as input and trains in the same way as the corresponding GCL model (i.e., the same loss function, optimization algorithm). In the inference process, we take the complete graph structure information and node features as input, and use the GNN encoder to obtain the embedding representations of all nodes.

Our contributions are summarized as follows:
\begin{enumerate}
\item We propose a novel GCL training framework, \ourmodel. Motivated by a low-rank approximation of the adjacency matrix, \ourmodel~significantly improves the scalability of GCLs by substituting message-passing with node compression. \ourmodel~trains MLP encoder on these mixed nodes and later transfers parameters to GNN encoder for inference. 

\item We customize a data augmentation method specifically for \ourmodel, making \ourmodel~adaptive to both single-view and multi-view GCL models. To the best of our knowledge, \ourmodel~is the first unified framework designed specifically for GCL training.

\item We theoretically guarantee that the compressed contrastive loss can be used to approximate the original graph contrastive loss. And we prove that our method introducing an extra regularization term into the scalable training, which makes the model more robust.
 
\item We empirically compare \ourmodel~with full graph training and other scalable training methods under four GCL models. Experimental results on seven datasets demonstrate that \ourmodel~improves the GCL model's performance, and significantly reduces memory consumption and training time.
\end{enumerate}

\section{Preliminaries}
\noindent\textbf{Notation.} Consider an undirected graph $G=(A, X)$, where $A\in\{0,1\}^{n\times n}$ represents the adjacency matrix of $G$, and $X\in \real^{n\times d}$ is the feature matrix. The set of vertices and edges is represented as $V$ and $E$, with the number of vertices and edges given by $n=|V|$ and $m=|E|$, respectively. The degree of node $v_i$ denoted as $d_i$. The degree matrix $D$ is a diagonal matrix and its $i$-th diagonal entry is $d_i$. 

\noindent\textbf{Graph neural network encoders}. The GNN encoders compute node representations by aggregating and transforming information from neighboring nodes. One of the most common encoders is the Graph Convolutional Network (GCN) \citep{kipf2016semi}, and its propagation rule is defined as follows:
\begin{equation}
	H^{(l+1)}=\sigma \left( \widetilde{D}^{-\frac{1}{2}}\widetilde{A}\widetilde{D}^{-\frac{1}{2}}H^{(l)}W^{(l)} \right),
\end{equation}
where $\widetilde{A} = A+I$, $\widetilde{D}=D+I$ and $W^{(l)}$ is a learnable parameter matrix. GCNs consist of multiple convolution layers of the above form, with each layer followed by an activation $\sigma$ such as ReLU. 

\noindent\textbf{Graph contrastive learning}.
Graph contrastive learning is an unsupervised graph representation learning method. Its objective is to learn the embeddings of the graph by distinguishing between similar and dissimilar nodes. Common methods of graph contrastive learning can be divided into two types: single-view graph contrastive learning \citep{zhang2020sce, zhu2021coles} and multi-view graph contrastive learning \citep{zhu2020grace, zhu2021gca, zhang2021ccassg, zheng2022ggd}.

In single-view graph contrastive learning, positive and negative sample pairs are generated under the same view of a graph. In this case, the positive samples are typically pairs of adjacent or connected nodes, while the negative samples are randomly selected pairs of non-adjacent nodes. Then, through a GNN encoder and a contrastive loss function, the algorithm learns to bring the embedding vectors of positive sample pairs closer and push the embedding of negative sample pairs further apart. The common single-view contrastive loss function \citep{hamilton2017inductive} of node $u$ is as follows:
\begin{equation}
\begin{aligned}
\mathcal{L}(u) = -\log(\sigma(z^T_u z_v ))-{\sum_{k=1}^{K} \log(\sigma(-z^T_u z_k ))}.
\end{aligned}
\end{equation}
Here, node $v$ is the positive sample of node $u$, node $k$ is the negative sample of node $u$, and $K$ represents the number of negative samples.

Multi-view graph contrastive learning uses different views of the graph to generate sample pairs. These views can be contracted by different transformations of the graph, such as DropEdge \citep{rong2019dropedge} and feature masking \citep{zhu2020grace}. We generate the embeddings of the nodes, aiming to bring the embedding vectors of the same node but from different views closer, while pushing the embedding vectors from different nodes further apart. The common multi-view contrastive loss function of each positive pair $(u,v)$ is as follows:
\begin{equation}
\begin{aligned}
\mathcal{L}(u, v) = 
 \log \frac {e^{\phi\left(z_u, z_v \right) / \tau}} {e^{\phi\left(z_u, z_v \right) / \tau} + \sum_{k\neq u, k \in G_1} e^{\phi\left(z_u, z_k \right) / \tau} + \sum_{k \neq u, k \in G_2} e^{\phi\left(z_u,  z_k \right) / \tau}}.
\end{aligned}\label{eqn:mv_loss}
\end{equation}

Here $u$ and $v$ represent the same node from different views, $\phi$ is a function that computes the similarity between two embedding vectors. $G_1$ and $G_2$ are two different views of the same graph. $\tau$ is temperature parameter.

\section{Structural Compression}
\subsection{motivation}\label{subsec:motivation}

To reduce the training complexity of GCLs, we start with a low-rank approximation $C$ of the adjacency matrix $\hat{A}^k$, such that $\hat{A}^k=CC^T$. Although the complexity of matrix multiplication is significantly reduced by the approximation, the actual training time will not decrease due to the dense nature of $C$. Moreover, the amount of negative pairs needed for the contrastive learning remains $O(n^2)$. To address the above issues simultaneously, we introduce a sparse constraint for the low-rank approximation and force $C$ to be a graph partition matrix $P' \in \mathbb{R}^{n\times n'}$ ($P'_{ij}=1$ if and only if the node $i$ belongs to cluster $j$), where $n'$ is the number of clusters in the partition. Using $P' P^TX$ to approximate $\hat{A}^k X$ (where $P$ is the row-normalized version of $P'$), a key advantage is that nodes in the same cluster share the same embedding, and $P^TX$ contains all the information needed to compute the loss function. Therefore, we only compute $P^TX\in \mathbb{R}^{n'\times d}$: a ``node compression" operation, where nodes in the same cluster are merged together. Since nodes in the same cluster share their embeddings, performing contrastive learning on these compressed nodes is equivalent to that on the nodes after the low-rank propagation (i.e., $P' P^TX$). Thus, the number of negative pairs reduced to $n'^2$. Moreover, the complexity of matrix multiplication is now down to $O(n)$ while persevering the sparsity. In a nutshell, two major challenges for scalable GCL in Section \ref{sec:intro} can be solved simultaneously by our method.

To generate the graph partition matrix $P$, we need to solve the following optimization problem:
\begin{equation}
\label{eq:opt}
\begin{split}
&\mathrm{minimize} \quad \Vert P' P^T - \hat{A}^k \Vert,\\
&\mathrm{subject\ to} \quad P' \in \{0,1\}^{n \times n'},P'1_{n'}=1_n.\\
\end{split}
\end{equation}
Intuitively, $P' P^T$ is a normalized graph that connects every pair of nodes within a community while discarding all inter-community edges. Thus, $\Vert P' P^T - \hat{A} \Vert$ equals the number of inter-community edges plus the number of disconnected node pairs within communities. Minimizing the former is the classic minimum cut problem, and minimizing the latter matches well with balanced separation, given the number of nodes pair grows quadratically. These objective aligns well with off-the-shelf graph partition methods, so we directly utilize METIS to produce $P$. 

From the perspective of spatial domain, we always hope to obtain an embedding $f(\hat{A}^kXW)$ of the following form: embeddings of nodes of the same class are close enough, while embeddings of nodes of different classes are far apart. Intuitively, this goal can be simplified to the class centers of the various class node embeddings being far apart, and the similarity of nodes within the same class being as high as possible. In other words, we can use the community center $f(P^TXW)$ as the embedding that needs to be computed in the loss function, instead of using all nodes in the community for computation. On the other hand, if the embeddings of nodes within the community are identical, we no longer need to compute the similarity of nodes within the community. Considering these, it is natural to use $P^TX$ in place of $\hat{A}^kX$ to compute the loss function. Moreover, since $P$ is solved based on graph partitioning, the result of the graph partitioning can facilitate the construction of positive and negative samples. The idea of using structural compression as a substitute of message-passing can be extended to a multi-layer and non-linear GNN, which is shown in Appendix \ref{appendix:nonlinear}.

\subsection{Framework of \ourmodel}

\textbf{Preprocessing}. We carry out an operation termed ``node compression". Based on the above analysis, we use the METIS algorithm to obtain the graph partition matrix, and then take the mean value of the features of the nodes in each cluster as the compressed feature, i.e., $X_c=P^T X$. After computing the compressed features, we also construct a compressed graph, i.e., $A_c=P^T AP$. Each node in $A_c$ represents a cluster in the original graph, and the edges represent the relationships between these clusters. $A_c$ is only used when constructing the contrastive loss and is \textbf{not} involved in the computations related to the encoder. To make our \ourmodel~adaptive to different types of GCL models, we carefully design some specific modules for single-view and multi-view GCLs, respectively.

\begin{figure*}[h]
\setlength{\abovecaptionskip}{-0.3cm}
	\centering
{\includegraphics[width=1.0\textwidth]{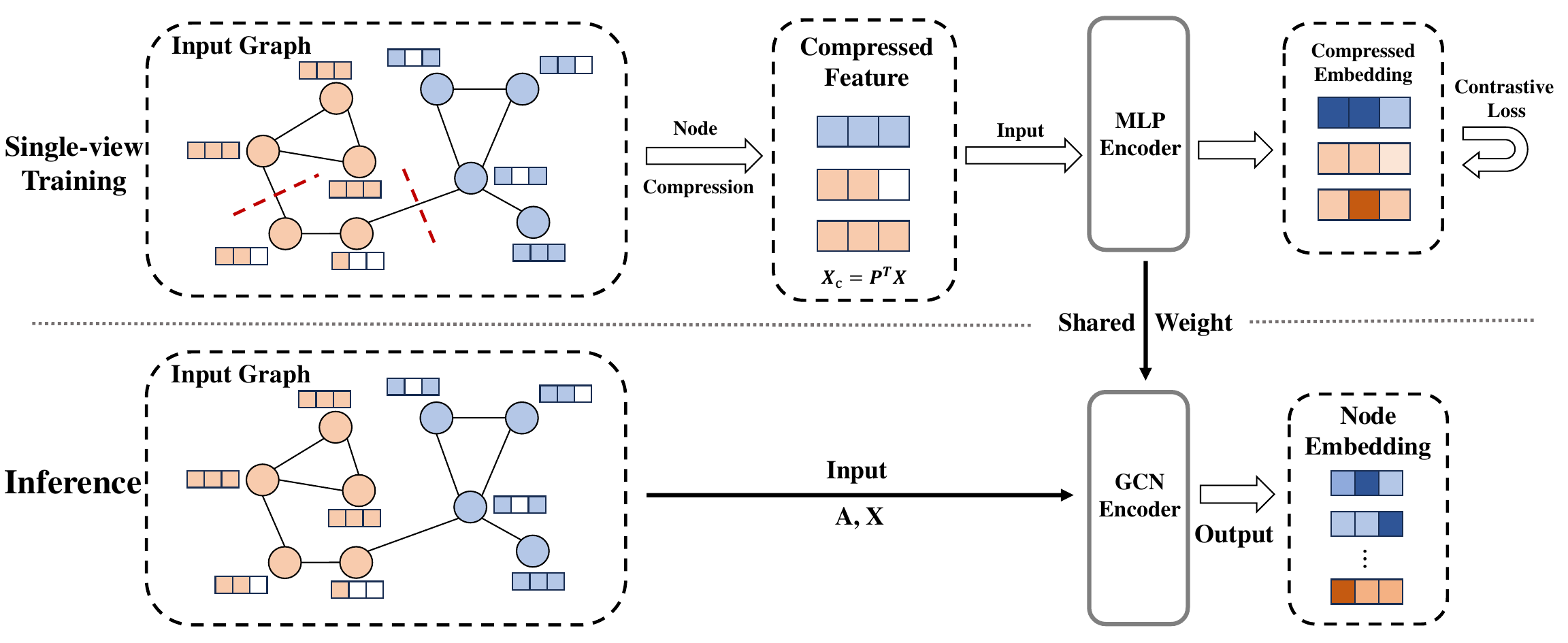}}
\caption{The overall framework of single-view \ourmodel.}
 	\label{sv_inf}
\end{figure*}

\textbf{Single-view \ourmodel.} In single-view graph contrastive learning, we use the preprocessed compression features $X_c$ as input, and replace the GNN encoders with MLP encoders. For instance, a two-layer neural network and embedding can be represented as follows:
\begin{equation}
Z_c=\sigma(\sigma (X_c W_1)W_2)
\end{equation}

We proposed to sample positive and negative pairs based on the compressed graph $A_c$ instead of $A$. One additional advantages of using the compressed graph over the original graph is that it significantly improves the accuracy of negative pairs sampling. For instance, since highly connected nodes are compressed together, they are not able to be selected as negative pairs. Then we use the same loss function and optimization algorithm as original GCL models to optimize the single-view contrastive learning loss $L(Z_c)$. Figure \ref{sv_inf} shows the flow chart of single-view \ourmodel.

Once the model is adequately trained, we transition to the inference phase. We revert the changes made during the training phase by replacing the MLP encoder back to the GNN encoder. Then, we input the complete graph structure information and node features to generate the embeddings for all nodes, as detailed below:
\begin{equation}
Z=\sigma(\hat{A}\sigma (\hat{A}X W_1)W_2).
\end{equation}
\textbf{Multi-view \ourmodel.} In multi-view contrastive learning, we need to compare two perturbed views of the graph. This requires us not only to compress the node features but also to apply data augmentation to these compressed features. 
However, traditional data augmentation methods such as DropEdge are not applicable in \ourmodel, as there is no longer an $A$ available for them to perturb during training. To fill this gap, we introduce a new data augmentation method called `DropMember'. This technique offers a novel way to generate different representations of compressed nodes, which are essential for multi-view contrastive learning under \ourmodel.

The DropMember method is implemented based on a predetermined assignment matrix P. For each node in the compressed graph, which represents a community, we randomly drop a portion of the nodes within the community and recalculate the community features. Formally, for each cluster $j$ in the augmented $X'_c$, we have:
\begin{equation}
x'_j=\frac{1}{s'}\sum_{i=1}^{s}m_ix_i.
\end{equation}
Here, $s$ represents the number of nodes contained in cluster $j$,  $m_i$ is independently drawn from a Bernoulli distribution and $s'=\sum_{i=1}^{s}m_i$. By performing contrastive learning on the compressed features obtained after DropMember and the complete compressed features, we can train a robust encoder. The loss of some node information within the community does not affect the embedding quality of the community.

\begin{figure*}[thbp]
\setlength{\abovecaptionskip}{-0.3cm}
	\centering
{\includegraphics[width=1.0\textwidth]{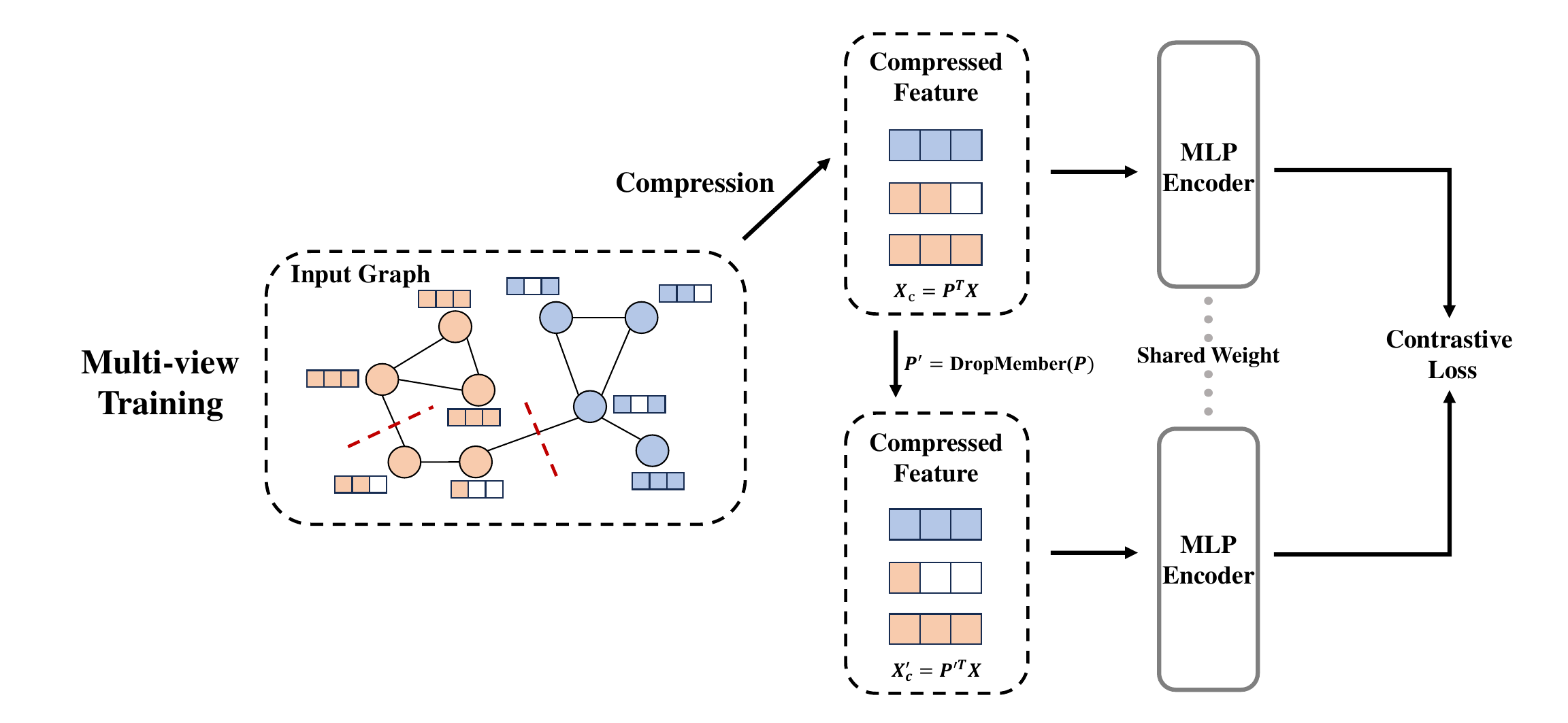}}
\caption{The training process of multi-view \ourmodel.}
 	\label{mv}
\end{figure*}

For the multi-view graph contrastive learning model, we need to compute the representations of two different views. Figure \ref{mv} shows the training process of multi-view \ourmodel. In our implementation, we use the complete $X_c$ and the perturbed $X'_c$ after DropMember as two different views. The embeddings of the two views are as follows:
\begin{equation}
\vspace{-1mm}
Z_c=\sigma(\sigma (X_c W_1)W_2), \quad Z'_c=\sigma(\sigma (X'_c W_1)W_2).
\vspace{-1mm}
\end{equation}

We use the same loss function and optimization algorithm as original multi-view GCL models to optimize the contrastive loss $\mathcal{L}(Z_c, Z'_c)$. Once the model is trained, the inference process of multi-view \ourmodel~is the same as that of single-view \ourmodel.

\section{Theory analysis of \ourmodel}

\subsection{The equivalence of the compressed loss and the original loss}

In this section, we demonstrate that the contrastive loss on the original graph is close to the sum of the compressed contrastive loss and the low-rank approximation gap. In other words, if the low-rank approximation in section \ref{subsec:motivation} is properly satisfied,  we can estimate the original GCL loss using the compressed contrastive loss.

Here we consider the Erd\H{o}s-Rényi model, denoted as $G(n,p)$, where edges between $n$ vertices are included independently with probability $p$. We use $\|\cdot \|_2$ to represent $l_2$ norm and $\|\cdot\|_F$ to represent Frobenius norm. Additionally, we denote the feature vector of each node as $X_i\in\R^d$. Then we can prove the following theorem, for simplicity, we only consider a one-layer message-passing and an unweighted node compression. We leave the proof and details of notations in Appendix \ref{appendix:equivalence}.

\begin{theorem}
\label{thm:partition}
    For the random graph $G(n,p)$ from Erd\H{o}s-Rényi model, we
    construct an
    even partition $\mathcal{P}=\{S_1,\cdots,S_{n'}\}$. Let  $f_G(X)=AXW$ be a feature mapping in the original graph and $f_\mathcal{P}(X)=P^{' T} XW$ as a linear mapping for the mixed nodes, where $W\in\R^{d\times d'}$. Then by conducting single-view contrastive learning, the contrastive loss for the original graph, denoted as $\mathcal{L}_G(W)$, can be approximated as the sum of the compressed contrastive loss, $\mathcal{L}_\mathcal{P}(W)$, and a term related to the low-rank approximation. Assume the features are bounded by $S_X := \max_i\|X_i\|_2$, we have
    \[
    \lvert\mathcal{L}_G(W) - \mathcal{L}_\mathcal{P}(W) \rvert 
    \leq \|A-P^{'}P^{' T}\|_F S_X\|W\|_2.
    \]
\end{theorem}

For a similar upper bound without the Erd\H{o}s-Rényi graph assumption, please refer to Appendix \ref{appendix:nonrandom}.
\subsection{The regularization introduced by \ourmodel} \label{sec:regulation_theory}

Following \cite{fang2023dropmessage}, we show that multi-view \ourmodel~is equivalent to random masking on the message matrices $M$, where $M_{i,j}=\psi(h_i, h_j, e_{i,j})$ and $\psi$ is a function that takes the edges $e_{i,j}$ and the attached node representations $h_i, h_j$. First, the low rank approximation $||P' P-\hat{A}^k||$ is dropping the inter-cluster edges $E_{\mathrm{drop}}=\{E_{i,j}|A^k_{i,j}=1~\mathrm{and}~S(i)\neq S(j)\}$, where $S(i)$ denote the cluster that node $i$ belongs to. And the latter is then equivalent to DropMessage $M_{\mathrm{drop}}=\{ M_i|\mathrm{edge}(M_i)\in E_{\mathrm{drop}}\}$, where $\mathrm{edge}(M_i)\in E_{\mathrm{drop}}$ indicates which edge that $M_i$ corresponds to. Our DropMember for the cluster $c$ is dropping $V^c_{\mathrm{drop}}=\{ X_i|\epsilon_i=0~\mathrm{and}~S(i)=c\}$. This is  equivalent to $M_{\mathrm{drop}}=\{ M_i|\mathrm{node}(M_i)\in \bigcup_c V^c_{\mathrm{drop}}\}$. Then we have the following theorem:

\begin{theorem}\label{the:regulation}
    Consider a no-augmentation InfoNCE loss,
    \begin{equation}
        \mathcal{L}_{\mathrm{InfoNCE}}=\sum_i \sum_{j\in \mathrm{pos}(i)}[h_i^T h_j] + \sum_i \sum_{j\in \mathrm{neg}(i)} [\log (e^{h_i^T h_i}+e^{h_i^T h_j})].
    \end{equation}
    Optimizing the expectation of this with augmentation $\mathbb{E}[\Tilde{\mathcal{L}}_{\mathrm{InfoNCE}}]$ introduce an additional regularization term, i.e.,
    \begin{equation}
         \mathbb{E}[\Tilde{\mathcal{L}}_{\mathrm{InfoNCE}}]=\mathcal{L}_{\mathrm{InfoNCE}}+\frac{1}{2}\sum_i \sum_{j\in \mathrm{neg}(i)}\phi(h_i,h_j)\mathrm{Var}(\Tilde{h}_i),
    \end{equation}
where $\phi(h_i,h_j)=\frac{(e^{h_i^2}h_i^2+e^{h_i h_j}h_j^2)(e^{h_i^2}+e^{h_i h_j})-(e^{h_i^2}h_i+e^{h_i h_j}h_j)^2}{2(e^{h_i^2}+e^{h_i h_j})^2}$.
\end{theorem}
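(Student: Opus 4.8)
The plan is to read the DropMember augmentation as an unbiased random perturbation of each embedding, $\tilde h_i = h_i + \delta_i$ with $\E[\delta_i]=0$ and per-coordinate variance $\Var(\tilde h_i)$, and then to expand $\Tilde{\mathcal{L}}_{\mathrm{InfoNCE}}$ to second order in $\delta_i$ before taking the expectation. Since the loss is an additive sum over pairs $(i,j)$, I would analyze each summand separately and recombine by linearity of expectation.

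First I would dispatch the positive term. Each summand $h_i^T h_j$ is affine in $\tilde h_i$, so $\E[\tilde h_i^T h_j] = h_i^T h_j$ exactly; the positive part therefore contributes only its unperturbed value and produces no regularizer. The whole regularization must come from the negative log-sum-exp terms $g(\tilde h_i) := \log\!\big(e^{\tilde h_i^T \tilde h_i}+e^{\tilde h_i^T h_j}\big)$, which I would Taylor-expand about $h_i$:
\begin{equation}
\E[g(\tilde h_i)] = g(h_i) + \nabla g(h_i)^T\,\E[\delta_i] + \tfrac12\,\E\big[\delta_i^T \nabla^2 g(h_i)\,\delta_i\big] + \cdots .
\end{equation}
The zeroth-order term reassembles the negative part of $\mathcal{L}_{\mathrm{InfoNCE}}$, the first-order term vanishes because $\E[\delta_i]=0$, and the surviving second-order term $\tfrac12\Tr\!\big(\nabla^2 g(h_i)\,\Sigma_i\big)$, with $\Sigma_i$ the covariance of $\delta_i$, is the candidate regularizer.

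The core computation is then to identify this second-order term with $\phi(h_i,h_j)\Var(\tilde h_i)$. Writing $p_a = e^{h_i^T h_i}/(e^{h_i^T h_i}+e^{h_i^T h_j})$ and $p_b = 1-p_a$, the Hessian of a log-sum-exp has the standard form $\nabla^2 g = p_a p_b\,(c_a-c_b)(c_a-c_b)^T$ (up to the curvature of the exponents), where $c_a,c_b$ are the derivatives of the two exponents in the perturbation direction. Under the linearization $c_a = h_i$, $c_b = h_j$ this collapses to $p_a p_b (h_i-h_j)(h_i-h_j)^T$; contracting with $\Sigma_i$ and using the algebraic identity $(e^{h_i^2}h_i^2+e^{h_ih_j}h_j^2)(e^{h_i^2}+e^{h_ih_j}) - (e^{h_i^2}h_i+e^{h_ih_j}h_j)^2 = e^{h_i^2}e^{h_ih_j}(h_i-h_j)^2$ shows that $\tfrac12 p_a p_b (h_i-h_j)^2$ is exactly $\phi(h_i,h_j)$. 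Hence the second-order term is proportional to $\phi(h_i,h_j)\Var(\tilde h_i)$, and summing over negatives reproduces the claimed regularizer, with the overall constant pinned down by the bookkeeping of the expansion.

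The hard part will be two-fold. First, the self-similarity exponent $h_i^T h_i$ is quadratic in the perturbed variable, so its exact Hessian contributes extra curvature (a $p_a\,\nabla^2(h_i^T h_i)$ piece and a $2h_i$ rather than $h_i$ in $c_a$); matching the clean $(h_i-h_j)^2$ structure of $\phi$ requires committing to the first-order linearization used in DropMessage \citep{fang2023dropmessage} and arguing that the dropped curvature terms are negligible or absorbed. Second, I must justify truncating after second order — either by assuming the DropMember variance is small so third-order moments are genuinely lower order, or by bounding the remainder uniformly via boundedness of the embeddings — and confirm that only $\Var(\tilde h_i)$ enters, i.e. that the negative anchor $h_j$ is held fixed (or accounted for separately) within each summand.
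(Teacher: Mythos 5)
Your proposal follows essentially the same route as the paper's proof: a second-order Taylor expansion of the log-sum-exp around $h_i$, with the affine positive term and the first-order term vanishing by unbiasedness of the augmentation, and the surviving second-order term contracted with the perturbation variance yielding $\phi(h_i,h_j)\mathrm{Var}(\tilde h_i)$. Your two flagged subtleties are real but consistent with what the paper implicitly does --- it perturbs only one factor of the self-similarity exponent (writing $e^{\tilde h_i h_i}$ rather than $e^{\tilde h_i \tilde h_i}$) and truncates with an unquantified $\approx$ --- and your simplification $\phi = \tfrac12 p_a p_b (h_i-h_j)^2$ is a correct identity the paper does not state.
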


Theorem \ref{the:regulation} shows that, multi-view \ourmodel~not only improves the scalability of GCLs training, but also introduces an additional regularization term into the InfoNCE loss. By optimizing the variance of the augmented representations, encoders trained with \ourmodel~are more robust to minor perturbation. Please refer to the Appendix \ref{appendix:regularization} for more details.

\section{related work}

\noindent\paragraph{Scalable training on graph} 
To overcome the scalability issue of training GNNs, most of the previous scalable GNN training methods use sampling techniques \citep{hamilton2017inductive, chen2018fastgcn,chen2018stochastic,zou2019layer,cong2020minimal,ramezani2020gcn,markowitz2021graph,zeng2020graphsaint,chiang2019cluster,zeng2021decoupling}, including node-wise sampling, layer-wise sampling, and graph sampling. The key idea is to train GNNs with small subgraphs instead of the whole graph at each epoch. However, graph sampling techniques are mainly used for training supervised GNNs and are not applicable to unsupervised GNNs, as it is difficult to guarantee the provision of high-quality positive and negative samples after sampling. Another direction for scalable GNNs is to simplify models by decoupling the graph diffusion process from the feature transformation. The diffusion matrix is precomputed, and then a standard mini-batch training can be applied \citep{bojchevski2020scaling, chen2020scalable, wu2019simplifying}. This preprocessing method is also not applicable to graph contrastive learning, as the adjacency matrix and feature matrix are perturbed in contrastive learning, which necessitates the repeated computation of the diffusion matrix, rather than only in preprocessing. Besides, methods represented by GraphZoom \citep{chen2018harp,liang2018mile,deng2019graphzoom} learn node embeddings on the coarsened graph, and then refine the learned coarsed embeddings to full node embeddings. These methods mainly consider graph structural information, only applicable to handling traditional graph embedding\citep{Perozzi:2014:DOL:2623330.2623732,DBLP:conf/kdd/GroverL16}, but are not suitable for GCL models. Most importantly, these methods require a lot of time to construct the coarsened graph, and the coarse-to-refine framework inevitably leads to information loss.

See the Appendix \ref{sec:more_related}  for a discussion of the related work on graph contrastive learning.

\section{Experiment}

\subsection{Experimental Setup}

The results are evaluated on night real-world datasets \citep{kipf2016semi, velivckovic2018deep, zhu2021gca, hu2020ogb}, Cora, Citeseer, Pubmed, Amazon Computers, Amazon Photo, Ogbn-Arixv, Ogbn-Products and Ogbn-Papers100M. On small-scale datasets, including Cora, Citeseer, Pubmed, Amazon Photo and Computers, performance is evaluated on random splits. We randomly select 20 labeled nodes per class for training, while the remaining nodes are used for testing. All results on small-scale datasets are averaged over 50 runs, and standard deviations are reported.  For Ogbn-Arixv, Ogbn-Products and  Ogbn-Papers100M, we use fixed data splits as in previous studies \cite{hu2020ogb}. More detailed statistics of the night datasets are summarized in the Appendix \ref{sec:details}.

We use \ourmodel~to train two representative single-view GCL models, SCE \citep{zhang2020sce} and COLES  \citep{zhu2021coles}, and two representative multi-view GCL models, GRACE \citep{zhu2020grace} and CCA-SSG \citep{zhang2021ccassg}. To demonstrate the effectiveness of \ourmodel, we compare the classification performance of the original models and \ourmodel~trained models on small-scale datasets. For scalability on large graphs, we compare \ourmodel~with three scalable training methods (i.e., Cluster-GCN \citep{chiang2019cluster}, Graphsaint \citep{zeng2020graphsaint} and Graphzoom \citep{deng2019graphzoom}). For all the models, the learned representations are evaluated by classifiers under the same settings.

The key hyperparameter of our framework is the number of clusters, which is set to [300, 300, 2000, 1300, 700, 20000, 25000, 5000] on night datasets, respectively. All algorithms and models are implemented using Python and PyTorch Geometric. More implementation details can be found in Appendix \ref{sec:details}. Additional discussions and experimental results are included in Appendix \ref{more_exp}.

\subsection{Experimental Results} 

\noindent\paragraph{Performance on small-scale datasets} Table \ref{acc_small} shows the model performance on small datasets using the full graph training and \ourmodel~training. The results show that \ourmodel~improves the performance of the model in the vast majority of cases, especially the multi-view GCL models. In the single-view GCL models, \ourmodel~improves the average accuracy of SCE and COLES by 0.4\% and 0.2\%, respectively. In the multi-view GCL models, \ourmodel~improves the average accuracy of GRACE and CCA-SSG by 2.6\% and 1.6\%, respectively. The observed performance improvement can be attributed to two main factors. First, \ourmodel~constructs high-quality positive and negative pairs, e.g., it ensures that highly-connected nodes are not erroneously selected as negative pairs in multi-view GCLs. Second, as mentioned in Section \ref{sec:regulation_theory}, \ourmodel~implicitly introduces regularization into the contrastive learning process, resulting in a more robust encoder.

\begin{table*}[!thbp]\small
\vspace{-3mm}
	\caption{Comparison between \ourmodel~and full graph training across four GCL models on small datasets. The performance is measured by classification accuracy. ``Ave $\Delta$'' is the average improvement achieved by \ourmodel.}\label{tab:node}
	\centering
\begin{tabular}{l|ccccc|c}\toprule
\textbf{Method}& \textbf{Cora} & \textbf{Citeseer}&\textbf{Pubmed}& \textbf{Computers}&\textbf{Photo}&\textbf{Ave $\Delta$}\\ 
\midrule
SCE&81.0$\pm$1.3 &71.7$\pm$1.1&76.5$\pm$2.8&79.2 $\pm$1.7&87.8 $\pm$1.4  \\
SCE$_{\text{\ourmodel}}$ &81.6$\pm$0.9 &71.5$\pm$1.0&77.2$\pm$2.9 &79.7 $\pm$1.7&88.2 $\pm$1.4&+0.4 \\
\midrule
COLES&81.7$\pm$0.9&71.2$\pm$1.2 & 74.6$\pm$3.4 &79.5$\pm$1.6&88.5$\pm$1.4   \\

COLES$_{\text{\ourmodel}}$ &81.8$\pm$0.8&71.6$\pm$0.9&75.3$\pm$3.1 &79.4$\pm$1.6 &88.5$\pm$1.4&+0.2 \\
\midrule
GRACE&78.5$\pm$0.9  & 68.9$\pm$1.0 & 76.1$\pm$2.8& 76.2$\pm$1.9& 85.1$\pm$1.6   \\
GRACE$_{\text{\ourmodel}}$&79.7$\pm$0.9 &70.5$\pm$1.0&77.2$\pm$1.4 &80.6$\pm$1.5 &90.0$\pm$1.1 &+2.6\\
\midrule
CCA-SSG& 79.2$\pm$1.4 & 71.8$\pm$1.0 &76.0$\pm$2.0&82.7$\pm$1.0 &88.7$\pm$1.1 \\
CCA-SSG$_{\text{\ourmodel}}$& 82.3$\pm$0.8&  71.6$\pm$0.9&78.3$\pm$2.5&83.1$\pm$1.4 &90.8$\pm$1.0&+1.6 \\
\bottomrule
\end{tabular}\label{acc_small}
\end{table*}

\noindent\paragraph{Time and memory usage for small-scale datasets} Table \ref{tab:time_and_mem} shows the improvements in runtime and memory usage of each GCL model.  \ourmodel~saves the memory usage of the GCL models on the Cora, Citeseer, Pubmed, Computers, and Photo datasets by 5.9 times, 4.8 times, 33.5 times, 22.2 times and 57.1 times, respectively. At the same time, the training is speeded up by 1.8 times, 2.1 times, 17.1 times, 10.9 times and 21.1 times, respectively. This improvement is particularly evident when the dataset is large. 
The memory consumption for GRACE on the Computers dataset is even reduced by two orders of magnitude. These results strongly suggest that our method significantly reduces time consumption and memory usage while enhancing model performance.

\begin{table*}[!thbp]\scriptsize
\vspace{-3mm}
	\caption{Time (s/epoch) and memory usage (MB) for GCL training on small-scale datasets. ``Ave  improvement'' is the proportion of training resources used by \ourmodel~to the resources used by full graph training.}\label{tab:time_and_mem}
	\centering
\begin{tabular}{lcc|cc|cc|cc|cc}\toprule
\multicolumn{1}{c}{\multirow{2}*{\textbf{Method}}}& \multicolumn{2}{c}{\textbf{Cora}} & \multicolumn{2}{c}{\textbf{Citeseer}}& \multicolumn{2}{c}{\textbf{Pubmed}}& \multicolumn{2}{c}{\textbf{Photo}}&\multicolumn{2}{c}{\textbf{Computers}}\\ 
\cmidrule(r){2-11}
  &Mem &Time& Mem&Time&Mem &Time&Mem &Time&Mem &Time\\ 
\midrule
SCE& 82 &0.003& 159&0.004& 1831&0.027 &329&0.006&920&0.015\\
SCE$_{\text{\ourmodel}}$ &23  &0.002 & 59&0.002&54 &0.003 & 16&0.002&29&0.002  \\
\midrule
COLES& 115&0.004&204&0.004&1851&0.033&378&0.015&1018&0.031 \\
COLES$_{\text{\ourmodel}}$&24&0.002&60&0.003&61&0.003&21 &0.003 &39&0.003 \\
\midrule
GRACE&441 &0.017 &714&0.025&11677&0.252&1996&0.106 &5943&0.197 \\
GRACE$_{\text{\ourmodel}}$&37&0.009&72&0.009&194&0.009&59&0.008&54&0.008\\
\midrule
CCA-SSG& 132&0.010 &225&0.011&825&0.123&1197&0.112&2418&0.210  \\
CCA-SSG$_{\text{\ourmodel}}$&38&0.006&71&0.005&85&0.006&41&0.005&40&0.005\\
\midrule 
Ave  improvement & 5.9$\times$&1.8$\times$&4.8$\times$&2.1$\times$&33.5$\times$&17.1$\times$&22.2$\times$&10.9$\times$&57.1$\times$&21.1$\times$\\
\bottomrule
\end{tabular}
\end{table*}

\noindent\paragraph{Scalability on large graphs} Table \ref{tab:arxiv} and Table \ref{tab:products} show the results of \ourmodel~on two large datasets Arxiv and Products, and compare them with three scalable training methods: Cluster-GCN, Graphsaint, and Graphzoom. We tested these training methods on four GCL models. Our method achieves the best performance on all GCL models on both datasets. The experimental results for Papers100M are listed in Appendix \ref{papers100m}. The models trained by our method achieve the highest accuracy, while also require significantly lower memory and time consumption compared to other models. These results highlight the effectiveness of our method in handling large-scale graph data.

\begin{table*}[thbp]\scriptsize
	\caption{Performance and training consumption (time in s/epoch and memory usage in GB) on the Ogbn-Arxiv dataset. Each model is trained by \ourmodel~and three scalable training frameworks.}\label{tab:arxiv}
	\centering
\begin{tabular}{lccc|ccc|ccc|ccc}\toprule
\multicolumn{1}{c}{\multirow{2}*{\textbf{Method}}}& \multicolumn{3}{c}{\textbf{SCE}} & \multicolumn{3}{c}{\textbf{COLES}}& \multicolumn{3}{c}{\textbf{GRACE}}& \multicolumn{3}{c}{\textbf{CCA-SSG}}\\ 
\cmidrule(r){2-13}
  &Acc &Time &Mem &Acc &Time &Mem&Acc &Time &Mem&Acc &Time &Mem  \\ 
\midrule
Cluser-GCN&70.4$\pm$0.2&10.8&4.2&71.4$\pm$0.2&13.0&4.2&70.1$\pm$0.1 & 3.5&17.3 &72.2$\pm$0.1&2.2&1.3\\
Graphsaint&70.4$\pm$0.2&7.6&9.0&71.3$\pm$0.1&26.9&22.5&70.0$\pm$0.2&3.6&14.1&72.1$\pm$0.1&4.4&1.8\\
Graphzoom& 70.6$\pm$0.2&0.04&9.8&70.0$\pm$0.3&0.08&14.3&68.2$\pm$0.3& 3.9& 13.7 & 71.3$\pm$0.2&1.0&3.4\\
\ourmodel&\textbf{71.6}$\pm$0.2&\textbf{0.03}&\textbf{1.8}&\textbf{71.8}$\pm$0.2 & \textbf{0.05}&\textbf{3.4} &\textbf{71.7}$\pm$0.2 & \textbf{1.2}&\textbf{11.7} &\textbf{72.2}$\pm$0.1 &\textbf{0.9}&\textbf{0.5}\\
\bottomrule
\end{tabular}
\end{table*}

\begin{table*}[thbp]\scriptsize
	\caption{Performance and training consumption (time in s/epoch and memory usage in GB) on the Ogbn-Products dataset. Each model is trained by \ourmodel~and three scalable training frameworks.}\label{tab:products}
	\centering
\begin{tabular}{lccc|ccc|ccc|ccc}\toprule
\multicolumn{1}{c}{\multirow{2}*{\textbf{Method}}}& \multicolumn{3}{c}{\textbf{SCE}} & \multicolumn{3}{c}{\textbf{COLES}}& \multicolumn{3}{c}{\textbf{GRACE}}& \multicolumn{3}{c}{\textbf{CCA-SSG}}\\ 
\cmidrule(r){2-13}
  &Acc &Time &Mem &Acc &Time &Mem&Acc &Time &Mem&Acc &Time &Mem \\ 
\midrule
Cluser-GCN&74.6$\pm$0.2&196.8&8.0&75.2$\pm$0.2&239.1&8.0&75.2$\pm$0.1&35.1&16.2&74.1$\pm$0.3&37.1&5.7\\
Graphsaint&74.5$\pm$0.3&18.5&9.5&75.3$\pm$0.1&20.6&9.9&75.4$\pm$0.2&36.8&14.3&74.8$\pm$0.4&35.4&3.3\\
Graphzoom&60.6$\pm$0.5&0.06&8.8&68.1$\pm$0.4&0.1&13.2&61.0$\pm$0.4&11.1&13.1&68.6$\pm$0.3&4.5&10.0\\
\ourmodel&\textbf{75.2}$\pm$0.1 &\textbf{0.05 }&\textbf{2.7} & \textbf{75.5}$\pm$0.1&\textbf{0.08} &\textbf{5.3} &\textbf{75.7}$\pm$0.1&\textbf{4.0}&\textbf{12.0}&\textbf{75.8}$\pm$0.2&\textbf{3.7}&\textbf{0.6}\\
\bottomrule
\end{tabular}
\end{table*}

\noindent\paragraph{Comparison of loss trends} To examine the equivalence of \ourmodel~training and traditional GCL training, we plug the parameters $U$ trained with the compressed loss $\mathcal{L}(X_c ;U)$ back into the GNN encoder and compute the complete loss function $\mathcal{L}(A,X;U)$. Figure \ref{fig:loss_trend} shows the trends of $\mathcal{L}(A,X;U)$ and the original loss $\mathcal{L}(A,X;W)$ trained with full graph. The behavior of the two losses matches astonishingly. This observation implies that, \ourmodel~produces similar parameters to the traditional full graph GCL training.

\noindent\paragraph{Effect of compression rate} We study the influence of the compression rate on the performance of \ourmodel. We use \ourmodel~to train four GCL models under different compression rates on Cora, Citeseer, and Pubmed. Figure \ref{fig:ratio} shows the performance change with respect to compression rates. We observed that when the compression rate is around 10\%, the performance of the models is optimal. If the compression rate is too low, it may leads to less pronounced coarsened features, thereby reducing the effectiveness of the training. 

\begin{figure*}[t]
\vspace{-3mm}
	\setlength{\abovecaptionskip}{-0.1cm}
	\setlength{\belowcaptionskip}{-0.5cm}
	\centering
  \subfigure[SCE on Cora]{\includegraphics[width=0.24\textwidth]{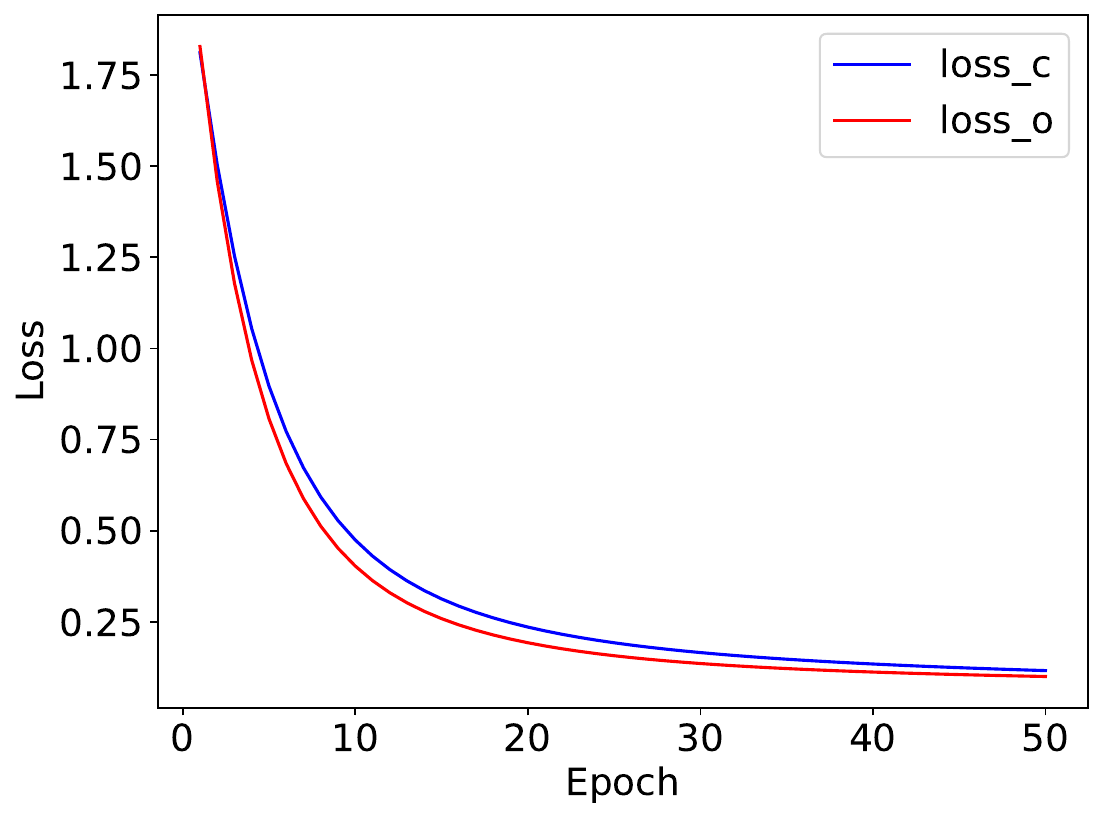}}
    \subfigure[COLES on Cora]{\includegraphics[width=0.24\textwidth]{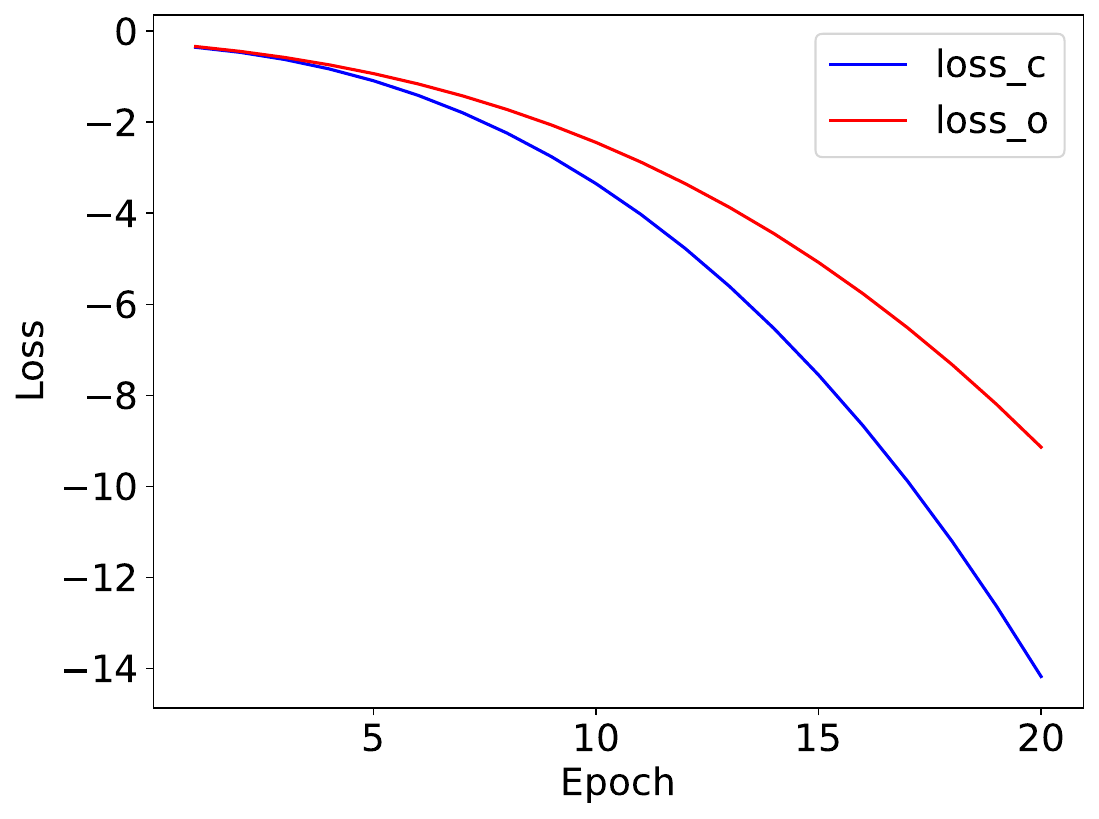}}
      \subfigure[GRACE on Cora]{\includegraphics[width=0.24\textwidth]{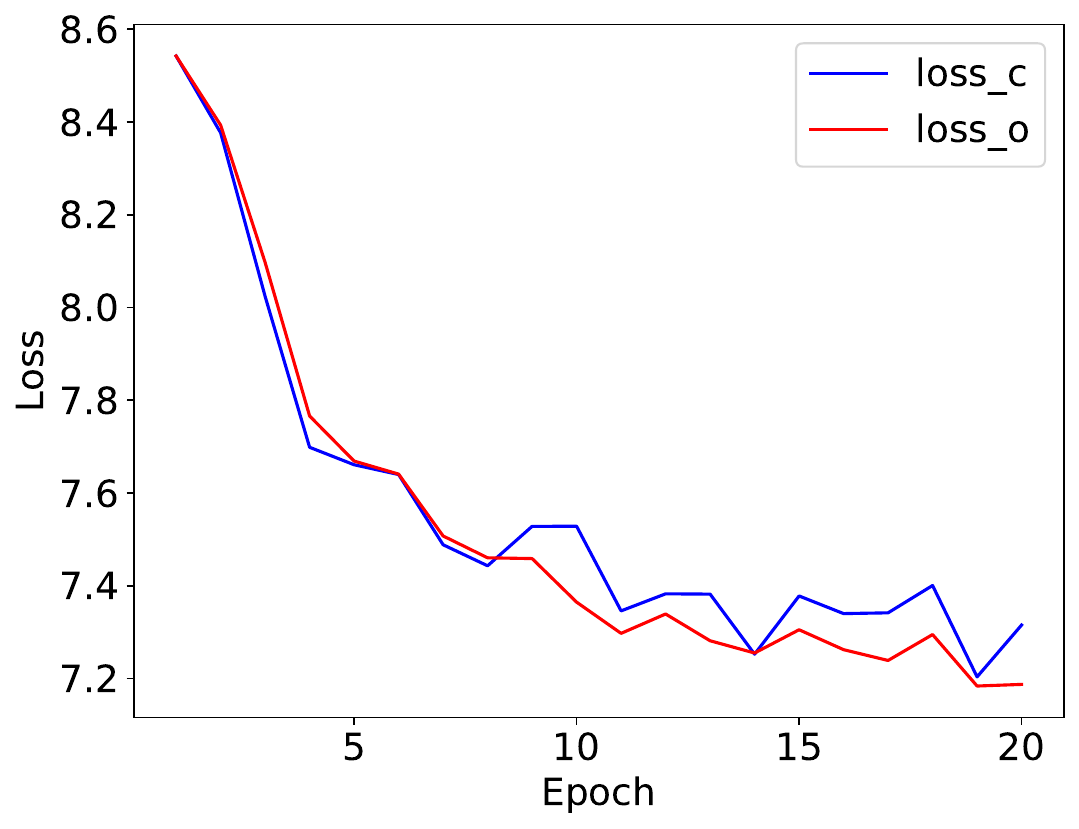}}
        \subfigure[CCA-SSG on Cora]{\includegraphics[width=0.24\textwidth]{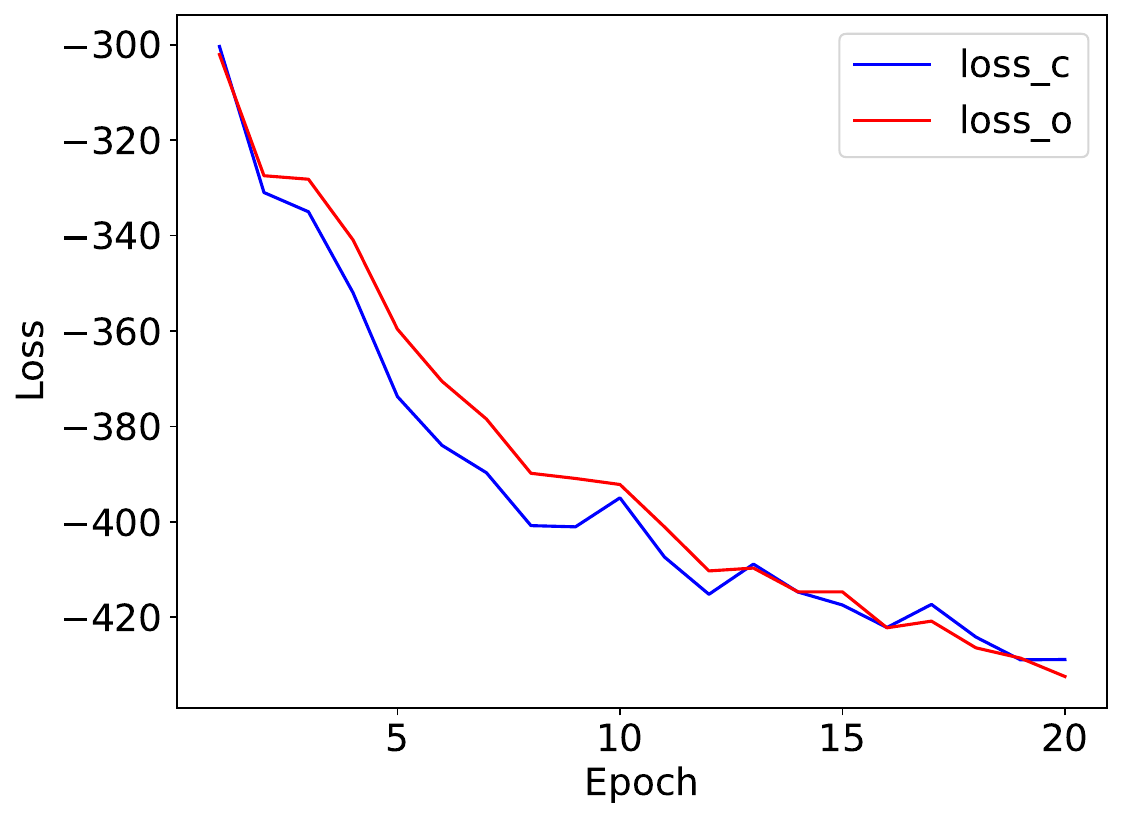}}
          \subfigure[SCE on CiteSeer]{\includegraphics[width=0.24\textwidth]{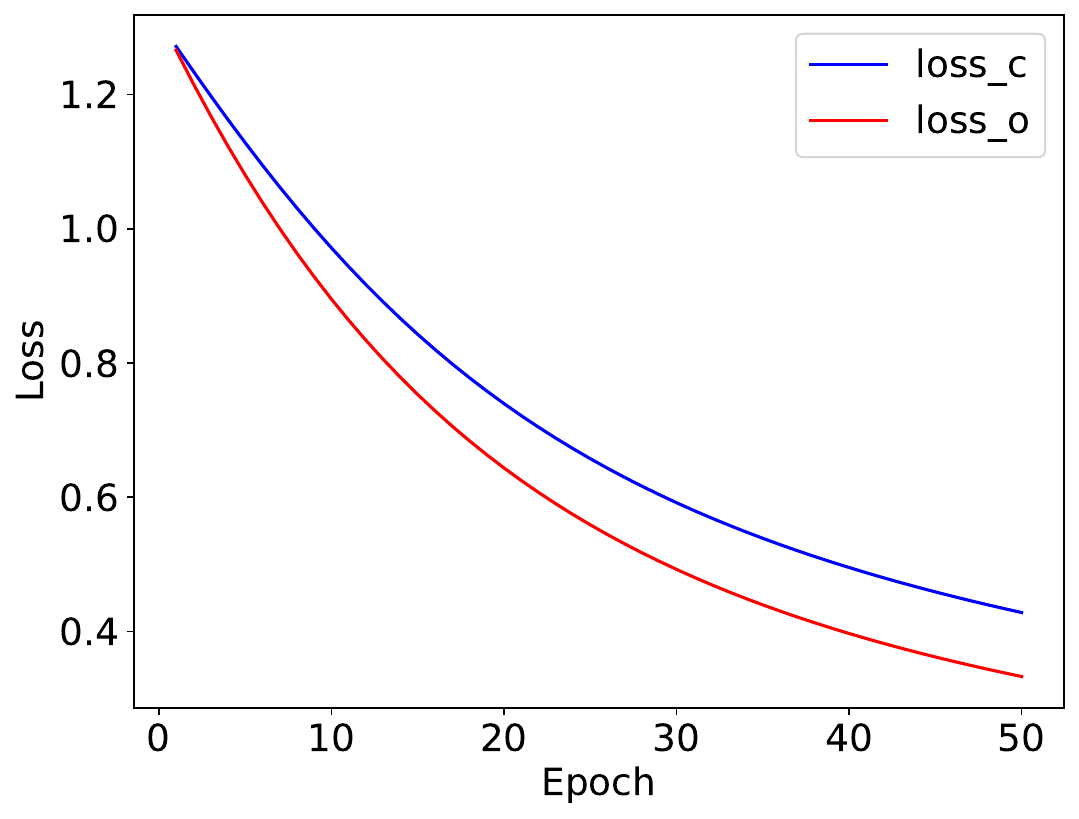}}
    \subfigure[COLES on CiteSeer]{\includegraphics[width=0.24\textwidth]{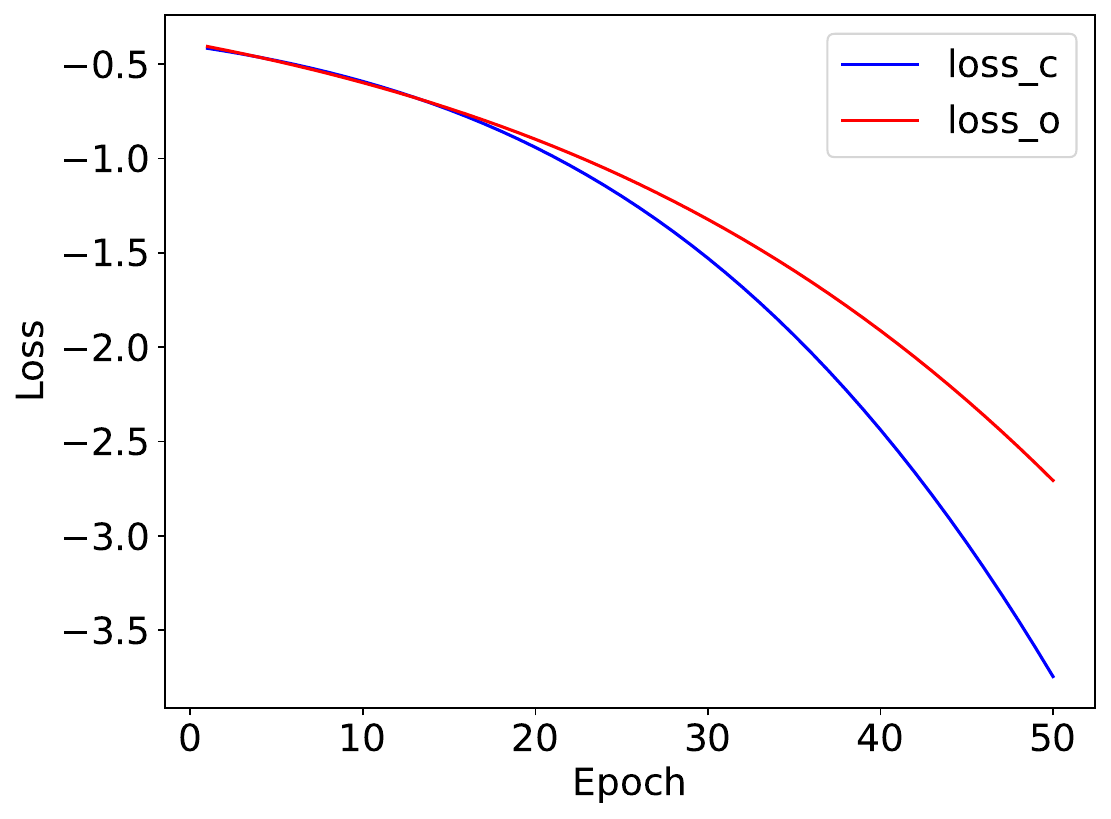}}
      \subfigure[GRACE on CiteSeer]{\includegraphics[width=0.24\textwidth]{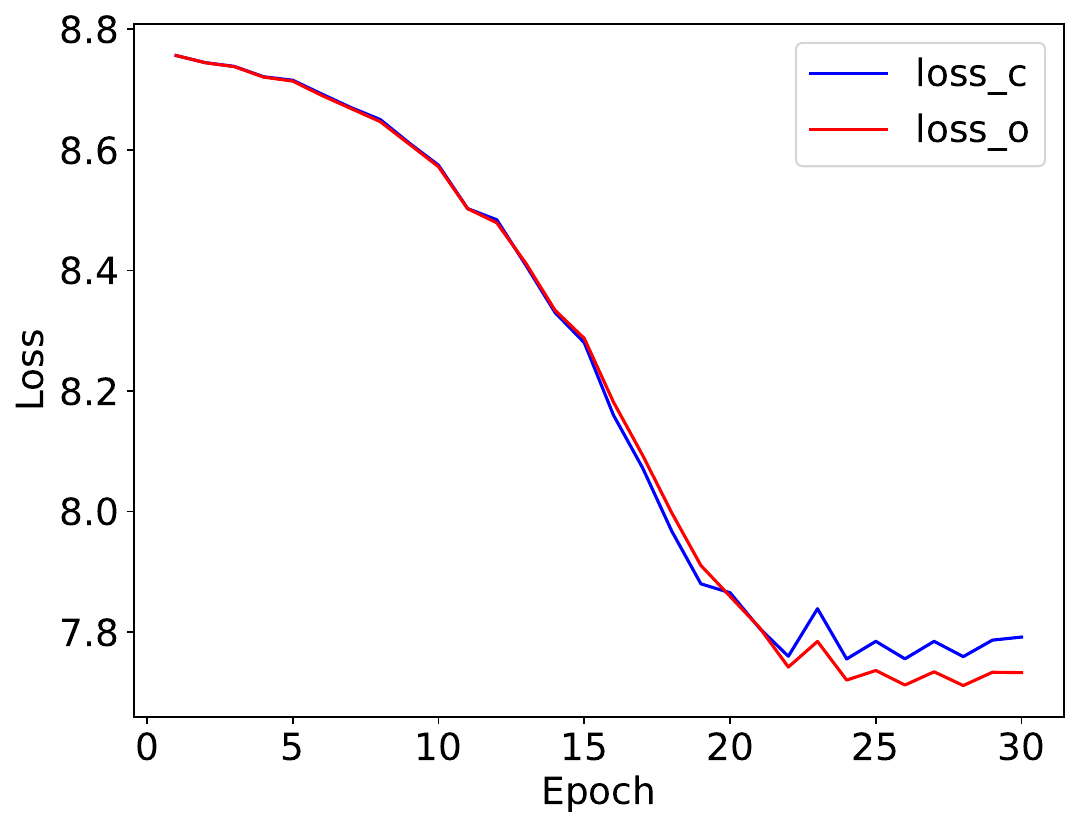}}
        \subfigure[CCA-SSG on CiteSeer]{\includegraphics[width=0.24\textwidth]{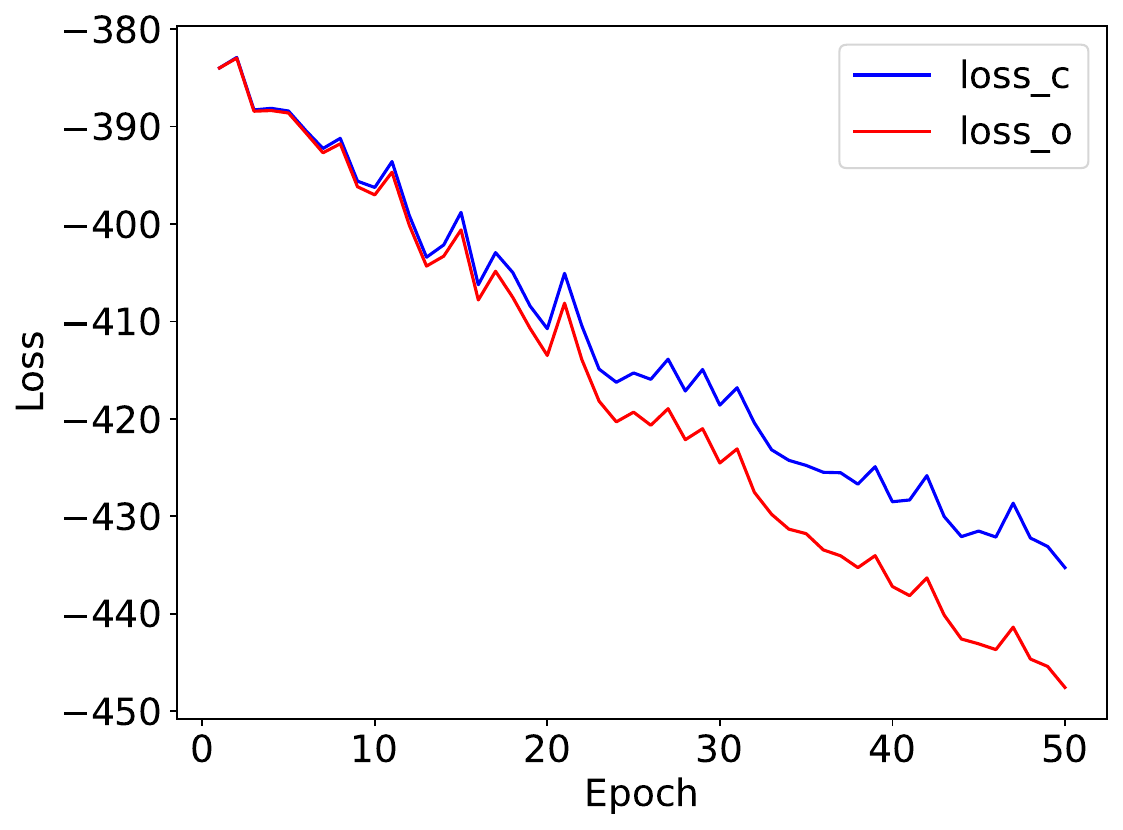}}
	\caption{The trends of the original GCL loss and the loss that computed by \ourmodel-trained parameters. ``loss$\_$o'' is $\mathcal{L}(A,X;W)$ and ``loss$\_$c'' is $\mathcal{L}(A,X;U)$ where $U$ is trained with $\mathcal{L}(X_c;U)$.
 }
 	\label{fig:loss_trend}
\end{figure*}

\begin{figure*}[ht]
\vspace{-3mm}
	\setlength{\abovecaptionskip}{-0.1cm}
	\setlength{\belowcaptionskip}{-0.5cm}
	\centering
  \subfigure[SCE]{\includegraphics[width=0.24\textwidth]{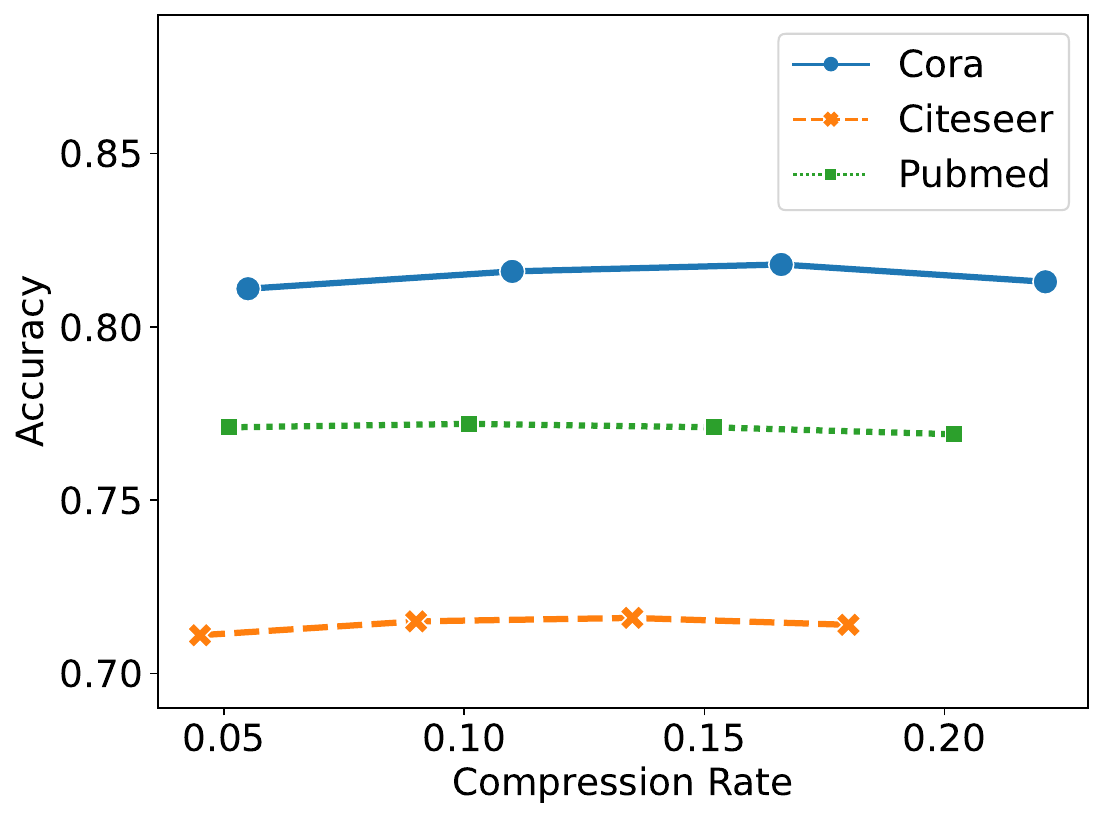}}
    \subfigure[COLES]{\includegraphics[width=0.24\textwidth]{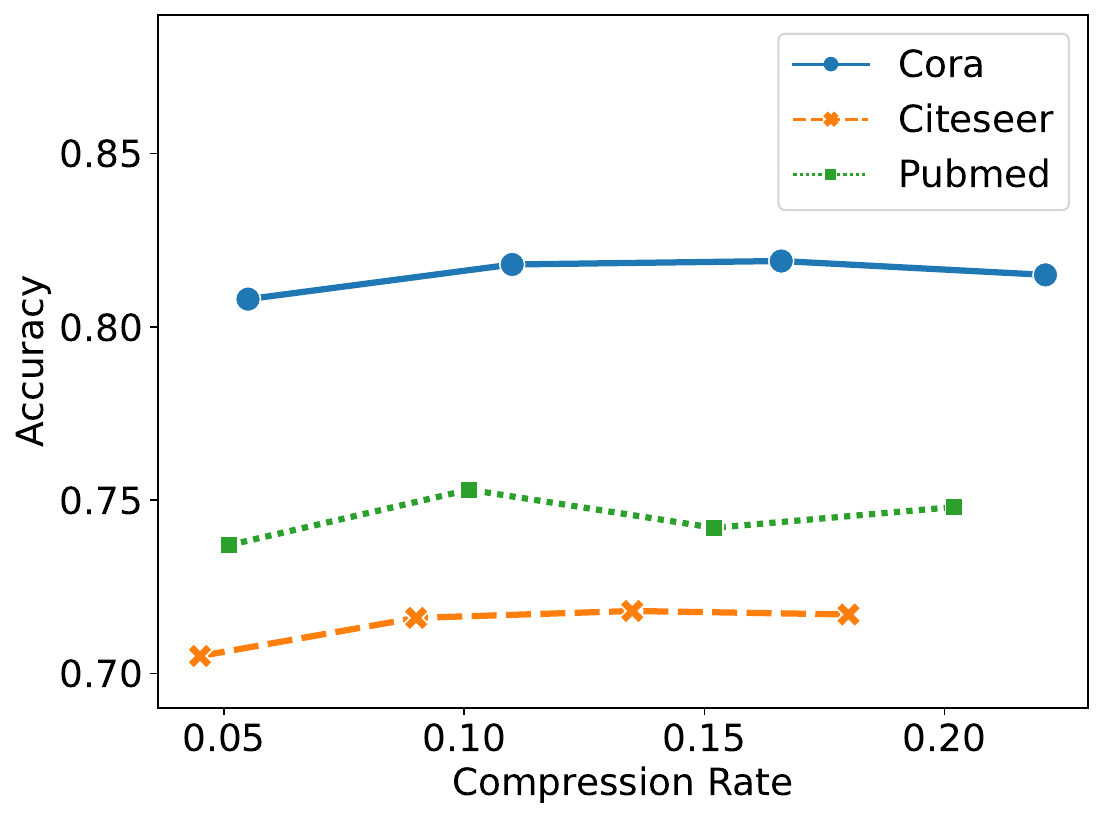}}
      \subfigure[GRACE]{\includegraphics[width=0.24\textwidth]{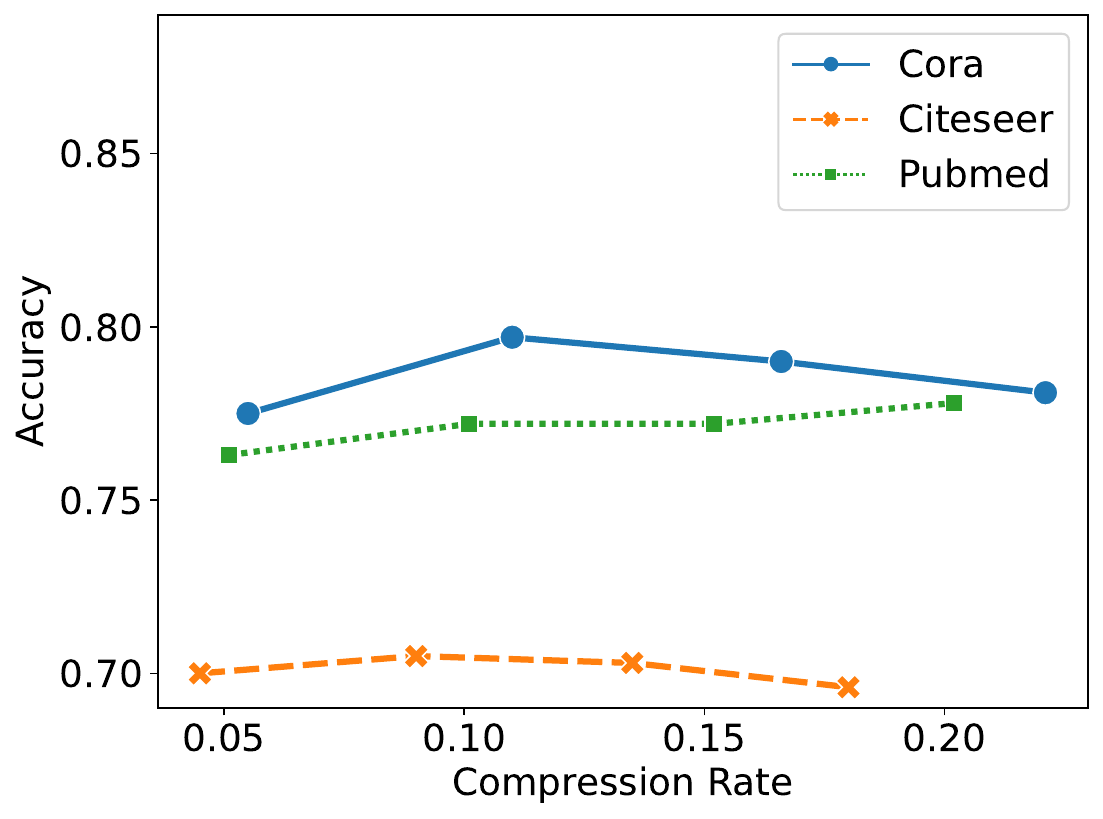}}
        \subfigure[CCA-SSG]{\includegraphics[width=0.24\textwidth]{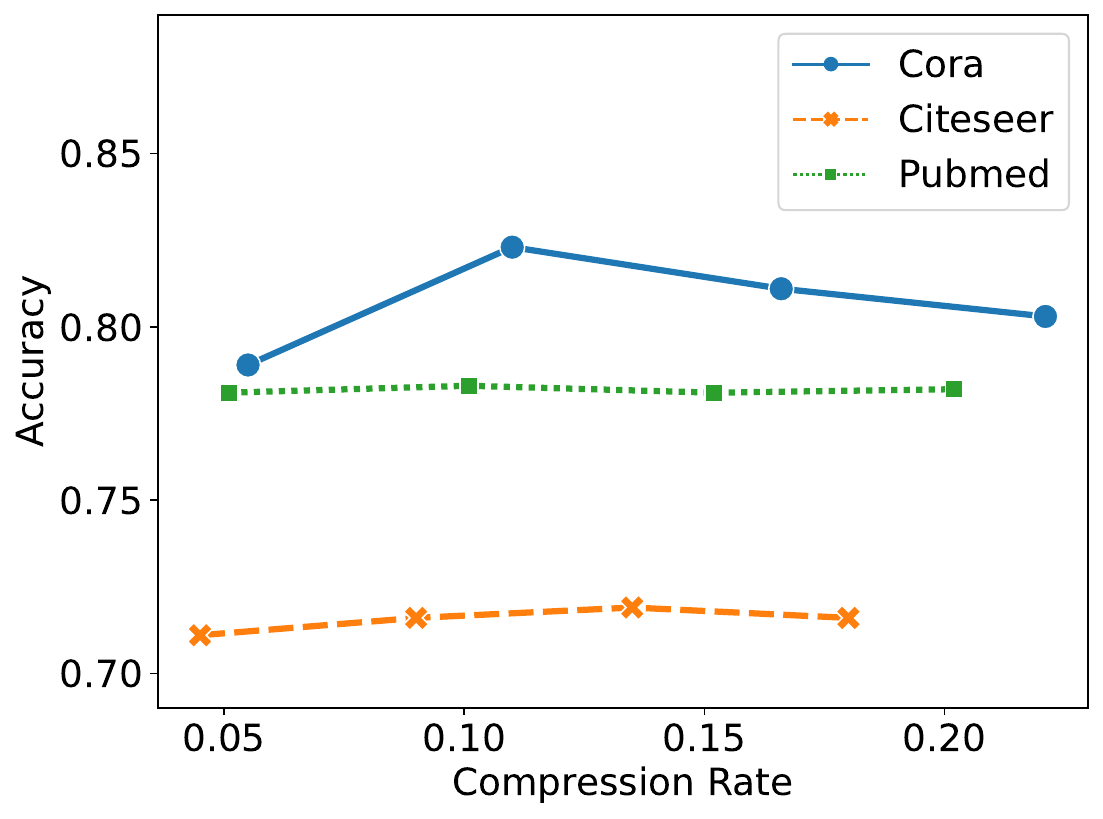}}
	\caption{ The influence of the compression rate on the performance of \ourmodel.}
 	\label{fig:ratio}
\end{figure*}

\section{Conclusion}
In this paper, we introduce \ourmodel, a scalable training framework for GCL. \ourmodel~is driven by a sparse low-rank approximation of the diffusion matrix. In \ourmodel, the message-passing operation is substituted with node compression, leading to substantial reductions in time and memory consumption. Theoretical analysis indicates that \ourmodel~implicitly optimizes the original contrastive loss with fewer resources and is likely to produced a more robust encoder.

\subsubsection*{Acknowledgments}
This work is supported by National Natural Science Foundation of China No.\ U2241212, No.\ 62276066.

\bibliography{iclr2024_conference}
\bibliographystyle{iclr2024_conference}

\appendix
\newpage
\renewcommand{\thetheorem}{4.\arabic{theorem}}

\section{Proof details}

\subsection{The non-linear extension for Equation \ref{eq:opt}}\label{appendix:nonlinear}
We explain how to extend the results to non-linear deep models below. 
 Equation \ref{eq:opt} provides the motivation of structural compression on a linear GNN (which can also be considered as an approximation to one layer in a multi-layer non-linear GNN). The analysis can be extended to non-linear deep GNNs. For instance, given a two-layer non-linear GCN $\sigma(\hat{A}\sigma(\hat{A}XW_1)W_2)$, we first approximate $\Tilde{A}$ by $P' P^T$, then the whole GCN can be approximated as
\begin{equation}
    \begin{split}
        \sigma(P' P^T\sigma(P' P^TXW_1)W_2)=\sigma(P' P^TP'\sigma(P^TXW_1)W_2)=P'\sigma(\sigma(P^TXW_1)W_2).
    \end{split}
\end{equation}
The first equality holds because $P'$ is a partition matrix and the last equality follows from the fact that $P^TP' = I$. Therefore, our analysis provides theoretical justifications of using StructComp as a substitute for non-linear deep GNNs.

\subsection{Proof for Theorem~\ref{thm:partition} }\label{appendix:equivalence}

\paragraph{Single-view Graph Contrastive Learning} We view adjacent nodes as positive pairs and non-adjacent nodes as negative pairs. In a coarse graph of supernodes, we define a pair of supernodes as positive if there exist two nodes within each supernode that are adjacent in the original graph.
Let $f:\mathcal{X}\rightarrow \R^{d'}$ be a feature mapping. We are interested in a variant of the contrastive loss function that aims to minimize the distance between the features of positive pairs $v,v^+$.

\begin{align}
\label{eq:loss}
    \mathcal{L}(f)= \E_{v,v^+} \|f(v)-f(v^+)\|_2.
\end{align}

\paragraph{Graph Partition}
For a graph $G=(V,E,X)$, we consider the Erd\H{o}s-Rényi model, denoted as $G(n,p)$, where edges between $n$ vertices are included independently with probability $p$. We compute a partition $\mathcal{P}$ on the nodes. We denote $\mathcal{P}=\{S_1,\cdots,S_{n'}\}$, such that $V = \bigcup_{j=1}^{n'} S_j$ and $S_{j_1}\bigcap S_{j_2}=\emptyset$ for $j_1\neq j_2$. 
We define a partition $\mathcal{P}$ to be an \textit{even partition} if each subset has the same size $|S_1|=|S_2|=\cdots=|S_{n'}|$.
For each vertex $v$, we denote $S(v)\in \mathcal{P}$ as the subset that $v$ belongs to. We denote $N(v):=\{s: (s,v)\in E \text{ or } (v,s)\in E\}$ as the set of neighbors of $v$. Denote $M(v):=N(v)\backslash S(v)$ be the set of $v$'s neighbors that are not in the subset with $v$.

For the partition $\mathcal{P}$, we denote $P'\in \{0,1\}^{n \times n'}$ as its corresponding partition matrix, where $P'_{ij}=1$ if the node $v_i$ belongs to the subset $S_j$. Let $R=A-P' P'^{T}$ be the partition remainder. By definition, $R_{ij}=1$ if node $j\in M(i)$. Here we define the \textit{partition loss} as
\[
\mathcal{L}_{\text{partition}} = \|R\|_F:=\|A-P' P'^{T}\|_F ,\text{ where }\|\cdot\|_F \text{ denotes the Frobenius norm.  }
\]

Now we are ready to prove Theorem~\ref{thm:partition}.

\begin{theorem}
    For the random graph $G(n,p)$ from Erd\H{o}s-Rényi model, we
    construct an
    even partition $\mathcal{P}=\{S_1,\cdots,S_{n'}\}$. Let  $f_G(X)=AXW$ be a feature mapping in the original graph and $f_\mathcal{P}(X)=P'^{T} XW$ as a linear mapping for the mixed nodes, where $W\in\R^{d\times d'}$. Then by conducting single-view contrastive learning, the contrastive loss for the original graph, denoted as $\mathcal{L}_G(W)$, can be approximated as the sum of the compressed contrastive loss, $\mathcal{L}_\mathcal{P}(W)$, and a term related to the low-rank approximation. Assume the features are bounded by $S_X := \max_i\|X_i\|_2$, we have
    \[
    \lvert\mathcal{L}_G(W) - \mathcal{L}_\mathcal{P}(W) \rvert 
    \leq \|A-P'P'^{T}\|_F S_X\|W\|_2.
    \]
\end{theorem}

\begin{proof}
    We denote the transpose of $i$-th row of $X$ as $X_{i}$. In the original graph, by the definition of adjacent matrix $A$, 
the embedding of each node $v_i$ is
\[
f_{G,W}(v_i) = (AXW)_{i} = \sum_{v_s\in N(v_i)} W^T X_s.
\]
For the compressed loss, each mixed node corresponds to a subset $S_i$ of the partition $\mathcal{P}$. We denote $B(S_i):=\{S_j\in \mathcal{P}: \exists u\in S_i,v\in S_j, s.t. (u,v)\in E\}$ as the set of mixed nodes that are connected to $S_i$.
Then the embedding of mixed node $S_i$ can be written as
\[
f_{\mathcal{P},W}(S_i) = (P'^{T} X W)_{i} 
\]
For a positive pair $u,v$ in the original graph, we can measure their difference in the feature subspace as follows.
\begin{align*}
     f_{G,W}(u)- f_{G,W}(v)
     =&  (AXW)_{u} - (AXW)_{v}\\
     = &W^T(\sum_{v_i\in S(u)} X_i - \sum_{v_i \in S(v)} X_i + \sum_{v_i \in M(u)}  X_i - \sum_{v_i\in M(v)} X_i )\\
    =&(P'^{T} XW)_{S(u)} -(P'^{T} XW)_{S(v)}  + (RXW)_{u}-(RXW)_{v}.
\end{align*}
For a positive pair $(S_i,S_j)$ in the compressed loss, their difference in the feature subspace is
\begin{align*}
    f_{\mathcal{P},W}(S_i)-f_{\mathcal{P},W}(S_j) 
    = (P'^{T} XW)_i - (P'^{T} XW)_j.
\end{align*}
For any $i,j\in[n]$, $(v_i,v_j)$ is an edge in $G$ independently with probability $p$. We denote $\mathrm{pos}$ as the set of positive pairs in the compressed loss.
Then by calculating contrastive loss on both graphs, we have
\begin{align*}
    &\mathcal{L}_G(W) - \mathcal{L}_{\mathcal{P}}(W)\\
    = &\mathop{\E}\limits_{(u,v)\in E}\| f_{G,W}(u)-f_{G,W}(v)\|_2
    - \mathop{\E}\limits_{(S_i,S_j) \in\text{ pos}}  \|f_{\mathcal{P},W}(S_i)-f_{\mathcal{P},W}(S_j) \|_2\\
    =&\mathop{\E}\limits_{(u,v)\in E} \|(P'^{T} XW)_{S(u)} -(P'^{T} XW)_{S(v)}  + (RXW)_{u}-(RXW)_{v}\|_2\\
    &- \mathop{\E}\limits_{(S_i,S_j) \in\text{ pos}}  \| (P'^{T} XW)_i - (P'^{T} XW)_j \|_2\\
    \leq & \mathop{\E}\limits_{(u,v)\in E} \|(P'^{T} XW)_{S(u)} -(P'^{T} XW)_{S(v)} \|_2 -\mathop{\E}\limits_{(S_i,S_j) \in\text{ pos}}  \| (P'^{T} XW)_i - (P'^{T} XW)_j \|_2 \\
    & +\mathop{\E}\limits_{(u,v)\in E} \|(RXW)_{u}-(RXW)_{v}\|_2\\
    =& \mathop{\E}\limits_{(u,v)\in E} \|(RXW)_{u}-(RXW)_{v}\|_2.
\end{align*}
The last step holds because the partition is even. On the other hand,
\begin{align*}
    &\mathcal{L}_{\mathcal{P}}(W) - \mathcal{L}_G(W)\\
    =& \mathop{\E}\limits_{(S_i,S_j) \in\text{ pos}}  \| (P'^{T} XW)_i - (P'^{T} XW)_j \|_2\\
    &- \mathop{\E}\limits_{(u,v)\in E} \|(P'^{T} XW)_{D(u)} -(P'^{T} XW)_{D(v)}  + (RXW)_{u}-(RXW)_{v}\|_2\\
    \leq & \mathop{\E}\limits_{(S_i,S_j) \in\text{ pos}}  \| (P'^{T} XW)_i - (P'^{T} XW)_j \|_2\\
    &-  \mathop{\E}\limits_{(u,v)\in E} \|(P'^{T} XW)_{D(u)} -(P'^{T} XW)_{D(v)} \|_2
    + \mathop{\E}\limits_{(u,v)\in E} \|(RXW)_{u}-(RXW)_{v}\|_2\\
    =&\mathop{\E}\limits_{(u,v)\in E} \|(RXW)_{u}-(RXW)_{v}\|_2.
\end{align*}
We combine both inequalities. Denote $\eta = \|A-P' P'^{T}\|_F$. Then we have
\begin{align*}
    \lvert\mathcal{L}_G(W) - \mathcal{L}_{\mathcal{P}}(W) \rvert 
    \leq& \mathop{\E}\limits_{(u,v)\in E} \|(RXW)_{u}-(RXW)_{v}\|_2\\
    \leq& \|\delta^T XW\|_2 \quad\qquad\rhd \delta \in \R^n \text{ is an }\eta \text{ sparse vector with }0,\pm 1\text{ entries}\\
    \leq &\sum_{i\in[n]:\delta_i\neq 0} \|X_iW\|_2\\
    \leq & \eta S_X\|W\|_2\\
    =& \|A-P' P'^{T}\|_F S_X\|W\|_2.
\end{align*}

\end{proof}

\subsection{An upper bound of the approximation gap without ER graph}\label{appendix:nonrandom}
We give an extra analysis on arbitrary graphs. For non-random graphs, the approximation gap of losses is simply bounded by the Equation \ref{eq:opt}. Suppose the loss $\mathcal{L}$ is $L$-Lipschitz continuous,
\begin{equation}
    \begin{split}
        |\mathcal{L}(P' P^TXW)-\mathcal{L}(\hat{A}^kXW)|\leq L \underbrace{\Vert P' P^T - \hat{A}^k\Vert}_\text{Equation \ref{eq:opt}}  \Vert X\Vert  \Vert  W\Vert .
    \end{split}
\end{equation}
And for a spectral contrastive loss $\mathcal{L}_{\text{spec}}$, assume the graph partition are even, we have:
\begin{equation}
    \begin{split}
        \mathcal{L}_{\text{spec}}(P^TXW)&=-\frac{2}{n}\sum_{i=1}^n e^T_{1,i}e_{2,i}+\frac{1}{n^2}\sum_{i=1}^n \sum_{j=1}^n (e^T_{1,i} e_{2,j})^2\\
        &=-\frac{2}{n}\sum_{k=1}^{n'}\sum_{i\in S_k}e^T_{1,i}e_{2,i}+\frac{1}{n^2}\sum_{i=1}^n \sum_{l=1}^{n'} \sum_{j\in S_l} (e^T_{1,i} e_{2,j})^2\\
        &=-\frac{2}{n'}\sum_{k=1}^{n'}E^T_{1,i}E_{2,i}+\frac{1}{nn'}\sum_{i=1}^n \sum_{l=1}^{n'}  (e^T_{1,i} E_{2,j})^2\\
        &=-\frac{2}{n'}\sum_{k=1}^{n'}E^T_{1,i}E_{2,i}+\frac{1}{nn'}\sum_{k=1}^{n'}\sum_{i\in S_k} \sum_{l=1}^{n'}  (e^T_{1,i} E_{2,j})^2\\
        &=-\frac{2}{n'}\sum_{k=1}^{n'}E^T_{1,i}E_{2,i}+\frac{1}{n'^2}\sum_{k=1}^{n'}\sum_{l=1}^{n'}  (E^T_{1,i} E_{2,j})^2=\mathcal{L}_{\text{spec}}(P' P^TXW),
    \end{split}
\end{equation}
where $e_{1,i}$ denotes the representations of a recovered node and $E^T_{1,i}$ denotes the representations of a compressed node. The above analysis shows that our approximation is reasonable for fixed graphs.

\subsection{Proof for Theorem \ref{the:regulation}} \label{appendix:regularization}
\begin{theorem}
    Consider a no-augmentation InfoNCE loss,
    \begin{equation}
        \mathcal{L}_{\mathrm{InfoNCE}}=\sum_i \sum_{j\in \mathrm{pos}(i)}[h_i^T h_j] + \sum_i \sum_{j\in \mathrm{neg}(i)} [\log (e^{h_i^T h_i}+e^{h_i^T h_j})].
    \end{equation}
    Optimizing the expectation of this with augmentation $\mathbb{E}[\Tilde{\mathcal{L}}_{\mathrm{InfoNCE}}]$ introduce an additional regularization term, i.e.,
    \begin{equation}
         \mathbb{E}[\Tilde{\mathcal{L}}_{\mathrm{InfoNCE}}]=\mathcal{L}_{\mathrm{InfoNCE}}+\frac{1}{2}\sum_i \sum_{j\in \mathrm{neg}(i)}\phi(h_i,h_j)\mathrm{Var}(\Tilde{h}_i),
    \end{equation}
where $\phi(h_i,h_j)=\frac{(e^{h_i^2}h_i^2+e^{h_i h_j}h_j^2)(e^{h_i^2}+e^{h_i h_j})-(e^{h_i^2}h_i+e^{h_i h_j}h_j)^2}{2(e^{h_i^2}+e^{h_i h_j})^2}$.
\end{theorem}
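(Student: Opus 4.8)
The statement is an instance of the classical ``noise as regularization'' phenomenon (in the spirit of the \cite{fang2023dropmessage} analysis invoked just above): replacing a deterministic representation $h_i$ by an unbiased augmentation $\tilde h_i$ with $\mathbb{E}[\tilde h_i]=h_i$ and then averaging a smooth loss reproduces the original loss plus a curvature term weighted by the perturbation variance. The plan is to Taylor-expand each summand of $\mathcal{L}_{\mathrm{InfoNCE}}$ in $\tilde h_i$ about its mean $h_i$ to second order, take expectations termwise, discard the first-order contribution (which vanishes by unbiasedness), and identify the surviving second-order piece with $\tfrac12\sum_i\sum_{j\in\mathrm{neg}(i)}\phi(h_i,h_j)\,\mathrm{Var}(\tilde h_i)$.

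\textbf{Separating the two sums.} First I would note that the positive-pair contribution $\sum_i\sum_{j\in\mathrm{pos}(i)}h_i^T h_j$ is affine in $h_i$, so its expectation under an unbiased $\tilde h_i$ equals its value at the mean and it contributes no regularization. Consequently all of the curvature originates from the negative-pair log-sum-exp terms $g(h_i):=\log\bigl(e^{h_i^T h_i}+e^{h_i^T h_j}\bigr)$. Computing the second-order behaviour of this log-sum-exp is the crux of the argument.

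\textbf{The expansion.} Expanding $g$ to second order about $h_i$ and using $\mathbb{E}[\tilde h_i-h_i]=0$ gives $\mathbb{E}[g(\tilde h_i)]=g(h_i)+\tfrac12 g''(h_i)\,\mathrm{Var}(\tilde h_i)+o(\mathrm{Var})$. For a log-sum-exp $g=\log(e^{z_1}+e^{z_2})$ whose logits $z_1=h_i^T h_i$ and $z_2=h_i^T h_j$ have first-order sensitivities $h_i$ and $h_j$ with respect to the perturbation, direct differentiation yields $g''=\mathbb{E}_p[(z_k')^2]-(\mathbb{E}_p[z_k'])^2$, i.e.\ the variance of $\{h_i,h_j\}$ under the softmax weights $p_1\propto e^{h_i^2}$, $p_2\propto e^{h_i h_j}$. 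Writing $a=e^{h_i^2}$ and $b=e^{h_i h_j}$, this equals $\tfrac{(a h_i^2+b h_j^2)(a+b)-(a h_i+b h_j)^2}{(a+b)^2}$, which is exactly $\phi(h_i,h_j)$ up to the constant built into its definition. Summing over $i$ and over $j\in\mathrm{neg}(i)$ then assembles the claimed regularizer.

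\textbf{Main obstacle.} The delicate point is the differentiation of the self-logit $h_i^T h_i$. A literal chain rule on $(h_i+\delta)^T(h_i+\delta)$ produces both an extra linear factor and a nonzero quadratic (Hessian) contribution, which would spoil the clean softmax-variance form. The matching to $\phi$ therefore requires treating this logit consistently with the single-copy sensitivity $h_i$, so that $g''$ collapses to the weighted variance above; making this convention precise (and checking it agrees with the DropMessage-style perturbation model) is where the real care is needed. The only remaining issue is controlling the $o(\mathrm{Var})$ remainder, which is standard for small-variance augmentations and is what turns the second-order approximation into the displayed identity.
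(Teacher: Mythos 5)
Your proposal follows essentially the same route as the paper: the positive-pair terms are affine in $\tilde h_i$ and vanish in expectation, while the negative-pair log-sum-exp is Taylor-expanded to second order about $h_i$, the first-order term dying by unbiasedness and the second-order term yielding the softmax-weighted variance of the logit sensitivities $\{h_i,h_j\}$, i.e.\ $\phi(h_i,h_j)$. The ``delicate point'' you flag is resolved in the paper exactly as you anticipate: the perturbation enters only one copy of the self-logit (the perturbed term is $\log(e^{\tilde h_i^T h_i}+e^{\tilde h_i^T h_j})$), so the single-copy sensitivity convention holds and $g''$ collapses to the stated weighted-variance form.
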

\begin{proof}
    
\begin{equation}
    \mathbb{E}[\Tilde{\mathcal{L}}_{\mathrm{InfoNCE}}]= \sum_i \sum_{j\in \mathrm{pos}(i)}[h_i^Th_j]  + \mathbb{E}[\Delta_1] + \sum_i \sum_{j\in \mathrm{neg}(i)} [\log (e^{h_i^T h_i}+e^{h_i^T h_j})] + \mathbb{E}[\Delta_2],
\end{equation}
where

\begin{equation}
    \begin{split}
        \mathbb{E}[\Delta_1]&=\mathbb{E}\left[\sum_i \sum_{j\in \mathrm{pos}(i)}[h_j(\Tilde{h}_i-h_i)]\right]=0
    \end{split}
\end{equation}

and
\begin{equation}
    \begin{split}
        \mathbb{E}[\Delta_2]&=\mathbb{E}\left[ \sum_i \sum_{j\in \mathrm{neg}}\log (e^{\Tilde{h}_ih_i}+e^{\Tilde{h}_ih_j}) - \log (e^{h_i h_i}+e^{h_i h_j})\right]\\
        &\approx \mathbb{E}[ \sum_i \sum_{j\in \mathrm{neg}}[\frac{e^{h_i h_i}h_i+e^{h_i h_j}h_j}{e^{h_i h_i}+e^{h_i h_j}} (\Tilde{h}_i-h_i)\\
        &+\frac{(e^{h_i^2}h_i^2+e^{h_i h_j}h_j^2)(e^{h_i^2}+e^{h_i h_j})-(e^{h_i^2}h_i+e^{h_i h_j}h_j)^2}{2(e^{h_i h_i}+e^{h_i h_j})^2}(\Tilde{h}_i-h_i)^2]]\\
        &=\frac{1}{2}\sum_i \sum_{j\in \mathrm{neg}}\frac{(e^{h_i^2}h_i^2+e^{h_i h_j}h_j^2)(e^{h_i^2}+e^{h_i h_j})-(e^{h_i^2}h_i+e^{h_i h_j}h_j)^2}{2(e^{h_i^2}+e^{h_i h_j})^2}\mathrm{Var}(\Tilde{h}_i).
    \end{split}
\end{equation}

Thus,

\begin{equation}
    \mathbb{E}[\Tilde{\mathcal{L}}_{\mathrm{InfoNCE}}]=\mathcal{L}_{\mathrm{InfoNCE}}+\frac{1}{2}\sum_i \sum_{j\in \mathrm{neg}}\phi(h_i,h_j)\mathrm{Var}(\Tilde{h}_i),
\end{equation}
where $\phi(h_i,h_j)=\frac{(e^{h_i^2}h_i^2+e^{h_i h_j}h_j^2)(e^{h_i^2}+e^{h_i h_j})-(e^{h_i^2}h_i+e^{h_i h_j}h_j)^2}{2(e^{h_i^2}+e^{h_i h_j})^2}$.   
\end{proof}

\section{More Related Work}\label{sec:more_related}

\noindent\paragraph{Graph contrastive learning} Graph contrastive learning is an unsupervised representation learning technique that learns node representations by comparing similar and dissimilar sample pairs. Classical graph contrastive learning models, such as DGI \citep{velivckovic2018deep} and MVGRL \citep{hassani2020contrastive}, use a loss function based on mutual information estimation \citep{belghazi18mutual, hjelm2018learning} to contrast node embeddings and graph embeddings. GRACE \citep{zhu2020grace} and its variants \citep{zhu2021gca, you2020graph} strive to maximize the similarity of positive pairs while minimizing the similarity of negative pairs in augmented graphs, intending to learn more effective node embeddings. However, the computational complexity of these methods is too high for datasets, limiting their applications in large-scale graphs. To reduce computational consumption, CCA-SSG \citep{zhang2021ccassg} simplified the loss function by eliminating negative pairs. Recently, GGD \citep{zheng2022ggd} uses binary cross-entropy loss to distinguish between two groups of node samples, further reducing computational usage. Despite recent related work focusing on the scalability problem of graph contrastive learning \citep{wang2022adagcl}, these methods still need to rely on graph sampling techniques when processing graphs with millions of nodes.

\section{Implementation details}\label{sec:details}
The detailed statistics for the datasets used for transductive node classification are shown in Table \ref{tab:datasets}. We compare GCL models trained with full graph and those trained with \ourmodel on small datasets. On Arxiv and Products, most GCL models cannot perform full graph training, so we compare the performance of different scalable training methods. For all GCL models, the learned representations are evaluated by training and testing a logistic regression classifier on smaller datasets. Due to Ogbn-Arxiv, Ogbn-Products and Ogbn-Papers100M exhibits more complex characteristics, we use a three-layers MLP as classifier.

\begin{table*}[!htbp]\small
\vspace{-5mm}
	\caption{Summary of the datasets used in our experiments}.	
	\begin{tabular}{l|cccc}\toprule \centering
	\textbf{Dataset}&\textbf{Nodes}&\textbf{Edges}&\textbf{Features}&\textbf{Classes}\\
	\midrule
	Cora&2,708&5,429 &1,433&7\\
	Citeseer&3,327&4,732&3,703&6\\
	Pubmed&19,717&44,338&500&3\\
	Amazon-Photo&7,650 &238,163&745&8\\
	Amazon-Computers&13,752&491,722& 767 &10\\
        Ogbn-Arxiv&169,343 &1,157,799  & 128 &  40 \\
        Ogbn-Products&2,449,029&61,859,140&100&47\\
Papers100M&111,059,956&1,615,685,872&128&172\\
\bottomrule
	\end{tabular}
	\label{tab:datasets}
	\centering
\end{table*}

We test the performance of \ourmodel~on four GCL models: SCE\footnote{\small SCE (MIT License): \url{https://github.com/szzhang17/Sparsest-Cut-Network-Embedding}}, COLES\footnote{\small COLES (MIT License): \url{https://github.com/allenhaozhu/COLES}},GRACE\footnote{\small GRACE (Apache License 2.0):  \url{https://github.com/CRIPAC-DIG/GRACE}}, 
  CCA-SSG\footnote{\small CCA-SSG (Apache License 2.0): \url{https://github.com/hengruizhang98/CCA-SSG}}. And we compared \ourmodel~with three scalable training methods Cluster-GCN\footnote{\small Cluster-GCN (MIT License): \url{https://github.com/pyg-team/pytorch_geometric/blob/master/examples/cluster_gcn_reddit.py}}, Graphsaint\footnote{\small Graphsaint (MIT License): \url{https://github.com/pyg-team/pytorch_geometric/blob/master/examples/graph_saint.py}}, and Graphzoom\footnote{\small Graphzoom (MIT License): 
  \url{https://github.com/cornell-zhang/GraphZoom}} on large graphs.  All the algorithms and models are implemented in Python and PyTorch Geometric. Experiments are conducted on a server with an NVIDIA 3090 GPU (24 GB memory) and an Intel(R) Xeon(R) Silver 4210R CPU @ 2.40GHz.

 In Table \ref{ formula}, we present the specific formulations of both the embeddings and the loss functions that we have trained in our experiments. All models are optimized using the Adam optimizer. The hyperparameters for GCL models trained with \ourmodel~ are basically the same as those used for full graph training of GCL models. We show the main hyperparameters in Table \ref{hyperparpameter1} and \ref{hyperparpameter2}. The remaining hyperparameter settings for each GCL model are list in our code: \url{https://github.com/szzhang17/StructComp}.

\begin{table*}[!thbp]\tiny
	\caption{The compression and loss function of GCL models under \ourmodel~framework.}\label{ formula}
	\centering
\begin{tabular}{lcl}\toprule
\textbf{Model} & \textbf{Compression embedding}&\textbf{Loss function}\\ 
\midrule
\textbf{SCE}&$Z_c=X_c W$& $\mathcal{L}=\frac{\alpha}{\mathsf{Tr}(Z_c^T L^{neg}_c Z_c)}$\\[2ex]
\midrule
\textbf{COLES}&$Z_c=X_c W$&$\mathcal{L}=\mathsf{Tr}(Z_c^T L^{neg}_c Z_c) -\mathsf{Tr}(Z_c^T L_c Z_c)$ \\[2ex]
\midrule
\textbf{GRACE}& $\begin{aligned}
    \left\{
    \begin{aligned}
    Z_c & = \textbf{ReLU} ((X_c W_1)W_2), \\[1ex]
    Z'_c & = \textbf{ReLU} ((X'_c W_1)W_2),
    \end{aligned}
    \right.
    \end{aligned}$ & $\begin{aligned}
\mathcal{L}(u, v) = 
 \log \frac {e^{\phi\left(z_u, z_v \right) / \tau}} {e^{\phi\left(z_u, z_v \right) / \tau} + \sum_{k\neq u, k \in G_1} e^{\phi\left(z_u, z_k \right) / \tau} + \sum_{k \neq u, k \in G_2} e^{\phi\left(z_u,  z_k \right) / \tau}}
\end{aligned}$\\[2ex]
\midrule
\textbf{CCA-SSG}&$\begin{aligned}
    \left\{
    \begin{aligned}
    Z_c & = \textbf{ReLU} ((X_c W_1)W_2), \\
    Z'_c & = \textbf{ReLU} ((X'_c W_1)W_2),
    \end{aligned}
    \right.
    \end{aligned}$& $\mathcal{L}=\|\tilde{Z_c} -\tilde{Z'_c}\|^2_F + \lambda (\|\tilde{Z_c}^T\tilde{Z_c}-I\|^2_F+\|\tilde{Z'_c}^T\tilde{Z'_c}-I\|^2_F)$\\[2ex]
\bottomrule
\end{tabular}
\end{table*}

\begin{table*}[!hbp]\scriptsize

	\caption{Summary of the  main hyper-parameters on small datasets.}\label{tab:hyper_small}
	\centering
\begin{tabular}{lcc|cc|cc|cc|cc}\toprule
\multicolumn{1}{c}{\multirow{2}*{\textbf{Model}}}& \multicolumn{2}{c}{\textbf{Cora}} & \multicolumn{2}{c}{\textbf{Citeseer}}& \multicolumn{2}{c}{\textbf{Pubmed}}& \multicolumn{2}{c}{\textbf{Photo}}&\multicolumn{2}{c}{\textbf{Computers}}\\ 
\cmidrule(r){2-11}
  &Lr &Epoch &Lr &Epoch &Lr &Epoch &Lr &Epoch &Lr &Epoch\\ 
\midrule
SCE$_{\text{\ourmodel}}$ &0.001  &50 &0.0001&50&0.02 &25 & 0.001&20&0.001&20  \\
COLES$_{\text{\ourmodel}}$&0.001&20&0.0001&50&0.02&50&0.001 &20 &0.001&20 \\
GRACE$_{\text{\ourmodel}}$&0.001&20&0.0001&30&0.02&75&0.001&200&0.001&150\\
CCA-SSG$_{\text{\ourmodel}}$&0.001&20&0.0001&50&0.01&50&0.002&100&0.005&40\\
\bottomrule
\end{tabular}\label{hyperparpameter1}
\end{table*}

\begin{table}[!thbp]\scriptsize
\caption{Summary of the  main hyper-parameters on large datasets.}\label{hyperparpameter2}
	\centering
\begin{tabular}{lccc|ccc|ccc}\toprule
& \multicolumn{3}{c}{\textbf{Ogbn-Arxiv}} & \multicolumn{3}{c}{\textbf{Ogbn-Products}} & \multicolumn{3}{c}{\textbf{Ogbn-Papers100M}} \\ 
\cmidrule(r){2-10}
 & Lr & Epoch & Weight decay & Lr & Epoch & Weight decay & Lr & Epoch & Weight decay \\ 
\midrule
SCE$_{\text{\ourmodel}}$ & 0.0001 & 10 & 0 & 0.0001 & 5 & 0.0005 & 0.001 & 1 & 0.0005 \\
COLES$_{\text{\ourmodel}}$ & 0.0001 & 5 & 0.0005 & 0.001 & 5 & 0.0005 & 0.001 & 1 & 0.0005 \\
GRACE$_{\text{\ourmodel}}$ & 0.001 & 5 & 0.0005 & 0.001 & 5 & 0.0005 & 0.001 & 1 & 0.0005 \\
CCA-SSG$_{\text{\ourmodel}}$ & 0.0001 & 10 & 0 & 0.001 & 10 & 0 & 0.001 & 1 & 0.0005 \\
\bottomrule
\end{tabular}
\end{table}

\section{More experimental results and discussions }\label{more_exp}

\subsection{Experiments on Ogbn-Papers100M}\label{papers100m}
We conduct experiments on the Ogbn-Papers100M dataset. The experimental results are shown in Table \ref{100m}. We use StructComp to train four representative GCL models. Here, we compressed ogbn-papers100M into a feature matrix $X_c \in R^{5000*128}$ and trained GCL using StructComp. The table also presents the results of GGD trained with ClusterGCN. Although GGD is specifically designed for training large graphs, when dealing with datasets of the scale of ogbn-papers100M, it still requires graph sampling to construct subgraphs and train GGD on a large number of subgraphs. In contrast, our StructComp only requires training a simple and small-scale MLP, resulting in significantly lower resource consumption compared to GGD+ClusterGCN.

\begin{table*}[!thbp]\small
\caption{
The accuracy, training time per epoch and memory usage on the Ogbn-Papers100M dataset.}\label{100m}
\centering
    \begin{tabular}{lccc}
    \toprule
     Method    & Acc & Time & Mem \\
     \midrule
     GGD& 63.5$\pm$0.5& 1.6h&4.3GB\\
       SCE$_{\text{StructComp}}$  & 63.6$\pm$0.4&0.18s&0.1GB\\
COLES$_{\text{StructComp}}$&63.6$\pm$0.4&0.16s&0.3GB  \\
GRACE$_{\text{StructComp}}$&64.0$\pm$0.3&0.44s&0.9GB\\
      CCA-SSG$_{\text{StructComp}}$&63.5$\pm$0.2&0.18s&0.1GB\\
       \bottomrule
    \end{tabular}
\end{table*}

\subsection{Experiments on the Stability of \ourmodel}
We conduct extra experiments to study the robustness of StructComp. We randomly add 10\% of noisy edges into three datasets and perform the node classification task. The experimental results are shown in Table \ref{robustness}. On the original dataset, the models trained with StructComp showed performance improvements of 0.36, 0.40, 1.30 and 1.87, respectively, compared to the models trained with full graphs. With noisy perturbation, the models trained with StructComp showed performance improvements of 0.80, 1.27, 2.47, and 1.87, respectively, compared to full graph training. This indicates that GCL models trained with StructComp exhibit better robustness.

\begin{table*}[!thbp]\small
\caption{The results over 50 random splits on the perturbed datasets.}\label{robustness}
\centering
    \begin{tabular}{lccc}
    \toprule
     Method    & Cora & Citeseer & Pubmed  \\
     \midrule
       SCE  & 78.8$\pm$1.2&69.7$\pm$1.0&73.4$\pm$2.2\\
       SCE$_{\text{StructComp}}$  & 79.3$\pm$0.9&69.3$\pm$0.9&75.7$\pm$2.8\\
       \midrule
       COLES& 78.7$\pm$1.2&68.0$\pm$1.0&66.5$\pm$1.8\\
COLES$_{\text{StructComp}}$& 79.0$\pm$1.0&68.3$\pm$0.9&69.7$\pm$2.6
       \\
       \midrule
GRACE&77.6$\pm$1.1&64.1$\pm$1.4&64.5$\pm$1.7\\
GRACE$_{\text{StructComp}}$&78.3$\pm$0.8&69.1$\pm$0.9&66.2$\pm$2.4\\
        \midrule
        CCA-SSG&75.5$\pm$1.3&69.1$\pm$1.2&73.5$\pm$2.2\\
         CCA-SSG$_{\text{StructComp}}$&78.2$\pm$0.7&69.2$\pm$0.8&76.3$\pm$2.5\\
       \bottomrule
    \end{tabular}
\end{table*}

\subsection{Experiments on deep GNN encoder}
In order to verify the approximation quality to the diffusion matrix of StructComp, we test the performance on a deep GNN architecture called SSGC \citep{zhu2021ssgc}. We transferred the trained parameters of StructComp to the SSGC encoder for inference. For full graph training in GCL, both the training and inference stages were performed using the SSGC encoder. Table \ref{ssgc_encoder} shows our experimental results, indicating that even with a deeper and more complicated encoder, StructComp still achieved outstanding performance.

\begin{table*}[!thbp]\small
	\caption{The results of GCLs with SSGC encoders over 50 random splits.}\label{ssgc_encoder}
	\centering
\begin{tabular}{l|ccc}\toprule
\textbf{Method}& \textbf{Cora} & \textbf{Citeseer}&\textbf{Pubmed}\\ 
\midrule
SCE&    81.8$\pm$0.9&  72.0$\pm$0.9                            &78.4$\pm$2.8\\
SCE$_{\text{StructComp}}$ & 82.0$\pm$0.8 &  71.7$\pm$0.9  &  77.8$\pm$2.9\\
\midrule
COLES&   81.8$\pm$0.9    &71.3$\pm$1.1&   74.8$\pm$3.4 \\
COLES$_{\text{StructComp}}$ & 82.0$\pm$0.8   &  71.6$\pm$1.0  & 75.6$\pm$3.0\\
\midrule
GRACE&80.2$\pm$0.8  & 70.7$\pm$1.0   &77.3$\pm$2.7 \\
GRACE$_{\text{StructComp}}$&81.1$\pm$0.8  & 71.0$\pm$1.0&    78.2$\pm$1.3\\
\midrule
CCA-SSG&  82.1$\pm$0.9    &71.9$\pm$0.9 &   78.2$\pm$2.8\\
CCA-SSG$_{\text{StructComp}}$&82.6$\pm$0.7  &  71.7$\pm$0.9 &   79.4$\pm$2.6   \\
\bottomrule
\end{tabular}
\end{table*}

\subsection{Comparison with recent GCL baselines}
We provide a comparison between StructComp and recent GCL baselines \citep{blockgcl,spgcl,contrastreg,zheng2022ggd}. The specific results are shown in Table \ref{recent_work}. For SP-GCL, we are unable to get the classification accuracy on CiteSeer since it does not take isolated nodes as input. The performance and resource consumption of various GCL models trained with StructComp are superior to recent GCL baselines.

It should be noted that the goal of these studies and our work are different. The aim of SPGCL is to handle homophilic graphs and heterophilic graphs simultaneously. BlockGCL attempts to explore the application of deep GNN encoder in the GCL field. Contrast-Reg is a novel regularization method which is motivated by the analysis of expected calibration error. GGD is a GCL model specifically designed for training large graphs, it is not a training framework. On the other hand, StructComp is a framework designed to scale up the training of GCL models: it aims to efficiently train common GCL models without performance drop. It is not a new GCL model that aims to achieve  SOTA performance compared to existing GCL models. So our work is orthogonal to these previous works. In fact, StructComp can be used as the training method for SP-GCL, BlockGCL, Contrast-Reg and GGD. In future work, we will further investigate how to train these recent graph contrastive learning methods using StructComp.

\begin{table*}[thbp]\scriptsize
	\caption{The results of StructComp-trained GCLs and some GCL baselines over 50 random splits.}\label{recent_work}
	\centering
\begin{tabular}{lccc|ccc|ccc}\toprule
\multicolumn{1}{c}{\multirow{2}*{\textbf{Method}}}& \multicolumn{3}{c}{\textbf{Cora}} & \multicolumn{3}{c}{\textbf{Citeseer}}& \multicolumn{3}{c}{\textbf{Pubmed}}\\ 
\cmidrule(r){2-10}
  &Acc &Time &Mem &Acc &Time &Mem&Acc &Time &Mem \\ 
\midrule
BlockGCL&78.1$\pm$2.0&0.026&180 &64.5$\pm$2.0 &  0.023& 329 & 74.7$\pm$3.1& 0.037&986\\
SP-GCL&81.4$\pm$1.2&0.016&247& -& 0.021&319&74.8$\pm$3.2&0.041& 1420\\
Contrast-Reg&79.2$\pm$1.3& 0.048& 355 & 69.8$\pm$1.6&0.097 &602&72.4$\pm$3.5&0.334&11655\\
GGD&79.9$\pm$1.7&0.013&118&  71.3$\pm$0.7&0.018&281&74.0$\pm$2.4&0.015&311\\
SCE$_{\text{StructComp}}$&81.6$\pm$0.9&0.002&23 &71.5$\pm$1.0&0.002&59&77.2$\pm$2.9&0.003&54\\
COLES$_{\text{StructComp}}$&81.8$\pm$0.8&0.002&24&71.6$\pm$0.9&0.003&60&75.3$\pm$3.1&0.003&61\\
GRACE$_{\text{StructComp}}$&79.7$\pm$0.9&0.009&37 &70.5$\pm$1.0&0.009&72&77.2$\pm$1.4&0.009&194 \\
CCA-SSG$_{\text{StructComp}}$& 82.3$\pm$0.8&0.006&38&  71.6$\pm$0.9&0.005&71&78.3$\pm$2.5&0.006&85 \\
\bottomrule
\end{tabular}
\end{table*}

\subsection{Discussion on Graph Partitioning}

We conducted additional experiments to investigate the impact of graph partitioning on the performance of StructComp. In Table \ref{different_partition}, we demonstrate the effects of three algorithms, algebraic JC, variation neighborhoods, and affinity GS, on the performance of StructComp. These three graph coarsening algorithms are widely used in scalable GNNs \citep{huang2021scaling}, from which we can obtain the specific graph partition matrix P. The experimental results suggest that different graph partition methods has little impact on StructComp on these datasets.

\begin{table*}[!thbp]\small
\caption{The results of different graph partition methods.}\label{different_partition}
\centering
    \begin{tabular}{lccc}
    \toprule
     Method    & Cora & Citeseer & Pubmed  \\
     \midrule
       VN+SCE$_{\text{StructComp}}$ & 81.3$\pm$0.8&71.5$\pm$1.0&77.5$\pm$2.7\\
       JC+SCE$_{\text{StructComp}}$ & 81.2$\pm$0.9&71.5$\pm$1.1&77.3$\pm$2.7\\
       GS+SCE$_{\text{StructComp}}$ & 81.5$\pm$0.8&71.4$\pm$1.0&77.4$\pm$3.0\\
       METIS+SCE$_{\text{StructComp}}$ &81.6$\pm$0.9 &71.5$\pm$1.0&77.2$\pm$2.9 \\
       \midrule
       VN+COLES$_{\text{StructComp}}$ &81.4$\pm$0.9&71.6$\pm$0.9&75.5$\pm$3.0\\
       JC+COLES$_{\text{StructComp}}$ &81.4$\pm$0.9&71.5$\pm$1.0&75.3$\pm$3.0\\
       GS+COLES$_{\text{StructComp}}$ &81.8$\pm$0.8&71.6$\pm$1.0&75.5$\pm$3.2\\
       METIS+COLES$_{\text{StructComp}}$ &81.8$\pm$0.8&71.6$\pm$0.9&75.3$\pm$3.1\\
       \bottomrule
    \end{tabular}
\end{table*}

\subsection{Experiments on inductive datasets}
Our StructComp can also be used to handle inductive node classification tasks \citep{hamilton2017inductive,zeng2021decoupling}. We provide additional experiments on inductive node classification in Table \ref{tab10}. Clearly, the GCL models trained with StructComp also perform exceptionally well on inductive node classification tasks.

\begin{table*}[!thbp]\small
	\caption{The results on two inductive datasets. OOM means Out of Memory on GPU.}\label{tab10}
	\centering
\begin{tabular}{l|cccccc}\toprule
\multicolumn{1}{c}{\multirow{2}*{\textbf{Method}}}& \multicolumn{3}{c}{\textbf{Flickr}} & \multicolumn{3}{c}{\textbf{Reddit}}\\ 
  &Acc &Time &Mem &Acc &Time &Mem\\
\midrule
SCE& 50.6&0.55&8427&-&-&OOM\\
SCE$_{\text{StructComp}}$ &51.6&0.003&43&94.4&0.017&1068 \\
\midrule
COLES&50.3&0.83&9270&-&-&OOM \\
COLES$_{\text{StructComp}}$&50.7&0.003&48&94.2& 0.024 & 1175\\
\midrule
GRACE& - & - & OOM & - & - & OOM\\
GRACE$_{\text{StructComp}}$&51.5 &0.010&221 & 94.3 &0.079 & 8683\\
\midrule
CCA-SSG&51.6 & 0.125& 1672&94.9& 0.21 & 5157\\
CCA-SSG$_{\text{StructComp}}$&51.8&0.007&99&95.2&0.56&457\\
\bottomrule
\end{tabular}
\end{table*}

\subsection{Experiments on heterophilous graphs}
We conduct experiments to train SP-GCL \citep{spgcl} with StructComp, in order to verify the performance of StructComp on heterophilous graphs. The experimental results are shown in Table \ref{spgcl}. Overall, the SP-GCL trained by StructComp is superior to full graph training. This is our initial attempt to use StructComp to handle heterophilous graphs, and it is obviously a valuable direction worth further research.

\begin{table*}[!thbp]\scriptsize
	\caption{The results on heterophilous datasets.}\label{spgcl}
	\centering
\begin{tabular}{lccc|ccc|ccc}\toprule
\multicolumn{1}{c}{\multirow{2}*{\textbf{Method}}}& \multicolumn{3}{c}{\textbf{Chameleon}} & \multicolumn{3}{c}{\textbf{Squirrel}}& \multicolumn{3}{c}{\textbf{Actor}}\\
\cmidrule(r){2-10}
  &Acc &Time &Mem &Acc &Time &Mem&Acc &Time &Mem\\
\midrule
SP-GCL&65.28$\pm$0.53&0.038&739&52.10$\pm$0.67& 0.080&3623&28.94$\pm$0.69&0.041&802\\
SP-GCL$_{\text{StructComp}}$&66.65$\pm$1.63&0.011&168&53.08$\pm$1.39&0.009&217&28.70$\pm$1.25&0.013&159\\
\bottomrule
\end{tabular}
\end{table*}

\end{document}